\newtheorem{theorem}{Theorem}
\newtheorem{claim}{Claim}
\newtheorem{proposition}{Proposition}
\newtheorem*{theorem*}{Theorem}
\newcommand{\R}{\mathbb{R}}
\newcommand{\X}{\mathcal{X}}
\newcommand{\N}{\mathcal{N}}
\newcommand{\Lp}[1]{\text{Lip}_1(#1)}
\newcommand{\one}{\mathbbm{1}}
\newcommand{\burden}{\text{cost of strategy}}
\title{The Role of Randomness and Noise in Strategic Classification}
\author[1]{
  Mark Braverman \thanks{
    Email: {\tt  mbraverm@cs.princeton.edu}. Research supported in part by the NSF Alan T. Waterman Award, Grant No. 1933331, a Packard Fellowship in Science and Engineering, and the Simons Collaboration on Algorithms and Geometry. Any opinions, findings, and conclusions or recommendations expressed in this publication are those of the author and do not necessarily reflect the views of the National Science Foundation. 
  }}
      \author[1]{
  Sumegha Garg\thanks{
    Email: {\tt sumegha.garg@gmail.com}.
  }
 }
 \affil[1]{ Department of Computer Science,
  Princeton University}
\date{}
\begin{document}
\maketitle
\sloppy
\begin{abstract}
We investigate the problem of designing optimal classifiers
in the ``strategic classification" setting, where the classification is part of a game in which players can modify their features to attain a favorable classification outcome (while incurring some cost). Previously, the problem has been considered from a learning-theoretic perspective and from the algorithmic fairness perspective. 

Our main contributions include
\begin{itemize}
	\item  Showing that if the objective is to maximize the efficiency of the classification process (defined as the accuracy of the outcome minus the sunk cost of the qualified players manipulating their features to gain a better outcome), then using randomized classifiers (that is, ones where the probability of a given feature vector to be accepted by the classifier is strictly between $0$ and $1$) is necessary.
	\item 
	Showing that in many natural cases, the imposed optimal solution (in terms of efficiency) has the structure where players never change their feature vectors (and the randomized classifier is structured in a way, such that the gain in the probability of being classified as a `1' does not justify the expense of changing one's features).
	\item 
	Observing that the randomized classification is not a \emph{stable} best-response from the classifier's viewpoint, and that the 
	classifier doesn’t benefit from randomized classifiers without creating instability in the system. 
\item 
	Showing that in some cases, a \emph{noisier signal} leads to better equilibria outcomes --- improving both accuracy and fairness when more than one subpopulation with different feature adjustment costs are involved. This is particularly interesting from a policy perspective, since it is hard to force institutions to stick to a particular randomized classification strategy (especially in a context of a market with multiple classifiers), but it is possible to alter the information environment to make the feature signals inherently noisier. 
\end{itemize} 
\end{abstract}
\clearpage
\section{Introduction}\label{sec:intro}

Machine learning algorithms are increasingly being used to make decisions about the individuals in various areas such as university admissions, employment, health, etc. As the individuals gain information about the algorithms being used, they have an incentive to adapt their data so as to be classified desirably. For example, if a student is aware that a university heavily weighs SAT score in their admission process, she will be motivated to achieve a higher SAT score either through extensive test preparation or multiple tries. Such efforts by the students might not change their probability of being successful at the university, but are enough to fool the admissions' process. Therefore, under such ``strategic manipulation" of one's data, the predictive power of the decisions are bound to decrease. One way to prevent such manipulation is by keeping the classification algorithms a secret, but this is not a practical solution to the problem, as some information is bound to leak over time and the transparency of these algorithms is a growing social concern. Thus, this motivates the study of algorithms that are optimal under ``strategic manipulation". The problem of gaming in the context of classification algorithms is a well known problem and is increasingly gaining researchers' attention, for example, \cite{hardt2016strategic,akyol2016price, hu2018disparate,milli2018social,dong2018strategic}. 

\cite{bruckner2011stackelberg} and \cite{hardt2016strategic} modeled strategic classification as a Stackelberg competition-- the algorithm (Jury) goes first and publishes the classifier, and then the individuals get to transform their data, after knowing the classifier, incurring certain costs to manipulate. The individuals would manipulate their features as long as the cost to manipulate is less than the advantage gained in getting the desirable classification. We assume that such manipulations don't change the actual qualifications of an individual. A natural question is: what classifier achieves optimal classification accuracy under the Stackelberg competition? These papers considered the task of strategic classification when the published classifier is deterministic. We study the role of randomness (and addition of noise to the features) in strategic classification and define the Stackelberg equilibrium for probabilistic classifiers, that assigns a real number in $[0,1]$, to each individual and a classification outcome $o$, representing the probability of being classified as $o$. 

As higher SAT scores are preferred by a university, the students would put an effort in increasing their SAT score, thereby, forcing the university to raise the score bar to optimize its accuracy (under the Stackelberg equilibrium). Due to this increased bar of acceptance, even the students who were above the true cutoff would have to put an extra effort to achieve a SAT score above this raised bar. And this effort is entirely the result of gaming in the classification system. We define the \emph{$\burden$}  for a published classifier to be the total extra effort, it induced, amongst the qualified individuals of the population. Then, we define the \emph{efficiency} of a published classifier to be its classification accuracy minus the $\burden$  under the Stackelberg equilibrium. A natural question here is: what classifier achieves the optimal efficiency? The efficiency of a published classifier represents the total impact of the classifier on all the agents in the Stackelberg equilibrium.

In normal classification problems it is never a good idea to use randomness, since one should always adhere to the best/utility maximizing action based on the prediction. Just as in games, randomness may lead to better solution in strategic classification., the paper aims to start understanding tradeoffs between efficiency losses due to randomness and efficiency gains through better equilibria induced by the randomized classifier.

Gaming in classification adds to the plethora of fairness concerns associated with classification algorithms, when the costs of manipulation are different across subpopulations. For example, a high weightage of SAT scores (for university admissions) favors the subgroups of the society that have the resources to enroll in test preparation or attempt the test multiple times. Further, varying costs across the subpopulations can lead to varied efforts put by identically qualified individuals, belonging to different subpopulations, to achieve the same outcome. \cite{milli2018social} and \cite{hu2018disparate} study the disparate effects of strategic classification on subpopulations (we will discuss these papers more in the related work section). \cite{hu2018disparate} observes that a single classifier might have different classification errors on subpopulations due to the varying cost of manipulations. We also study the effect of strategic manipulation on the classification errors across subpopulations and how randomized classifiers or noisy features may reduce the disparate effects. 

Strategic classification is a well known problem and there has been research in many other aspects of strategic classification, for example, learning the optimal classifier efficiently when the samples might also be strategic \cite{hardt2016strategic, dong2018strategic}, mechanism design under strategic manipulation \cite{chen2018strategyproof,eliaz2018model, kephart2015complexity}, and studying the manipulation costs that actually change the inherent qualifications \cite{kleinberg2018classifiers,miller2019strategic}. 
The focus of this paper is theoretically demonstrating the role of randomness and noise in the strategic setting. 

\subsection{Our contributions} 
Above, we talked about how strategic manipulation can deteriorate the classification accuracy and lead to unfair classification. We investigate the different scenarios of the classification task that help in regaining the lost accuracy and fairness guarantees. Our entire work is based on \emph{one-dimensional feature space}.

\subsubsection{Randomized classifiers} 
Firstly, we formulate the strategic classification task, when the published classifier is randomized. Instead of publishing a single binary classifier (for 2 classification outcomes, 0 and 1), the Jury publishes a distribution of classifiers and promises to pick the final classifier from that distribution. Another interpretation  is that the Jury assigns a value in $[0,1]$ to each feature value, which represents the probability of an individual with this feature being classified as $1$. The individuals manipulate their features, after knowing the set of classifiers but not the final classifier, incurring certain costs according to the \emph{cost function}. 

Not surprisingly, we show through examples that a probabilistic classifier can achieve strictly higher expected accuracy and efficiency than any binary classifier under strategic setting. Note that, without any strategic manipulation, a randomized classifier has no advantage over deterministic classifiers in terms of classification accuracy. The intuition is as follows: using randomness, the Jury can discourage the individuals from manipulating their features by making the advantage gained by any such a manipulation small enough.

For \emph{simple} cost functions, we then characterize the randomized classifier that achieves optimal efficiency. We prove that such a classifier sets the probabilities (of being classified as 1) such that none of the individuals have an incentive to manipulate their feature. Given two features $x$ and $x'$ in the feature space, let $c(x,x')$ denote the cost of manipulating one's feature from $x$ to $x'$. Informally, we say a cost function $c$ is \emph{simple} when all the costs are non-negative, the cost  to manipulate to a ``less" qualified feature is $0$, and the costs are sub-additive, that is, manipulating your feature $x$ directly to $x''$ is at least easier than first manipulating it to $x'$ and then to $x''$. The characterization theorem, stated informally, is as follows:
\begin{theorem*} [Informal statement of Theorem \ref{thm:charac}]
For simple cost functions, the most efficient randomized classifier is such that the best response of all the individuals is to reveal their true features.
\end{theorem*}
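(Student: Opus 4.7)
The plan is a revelation-principle style construction: starting from any optimal randomized classifier $p^*$, I would build a new classifier $\tilde{p}$ under which truth-telling is a best response and whose efficiency is at least that of $p^*$. Let $\phi(x)$ denote a best-response manipulation from $x$ under $p^*$ (breaking ties by some fixed rule), and define
\[
\tilde{p}(x) \; := \; p^*(\phi(x)) - c(x, \phi(x)),
\]
namely the equilibrium utility of an individual at $x$ under $p^*$. To see $\tilde{p}(x)\in[0,1]$: the upper bound follows since $c \ge 0$ and $p^* \le 1$; the lower bound follows because $\phi(x)$ is optimal at $x$, so the equilibrium utility is at least the ``stay put'' utility $p^*(x) - c(x,x) = p^*(x) \ge 0$, using $c(x,x)=0$ from simplicity.

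Next I verify that no individual wants to manipulate under $\tilde{p}$. If an individual at $x$ considers moving to $x'$, their utility is
\[
\tilde{p}(x') - c(x,x') \;=\; p^*(\phi(x')) - c(x', \phi(x')) - c(x, x').
\]
By sub-additivity $c(x, \phi(x')) \le c(x, x') + c(x', \phi(x'))$, so this utility is at most $p^*(\phi(x')) - c(x, \phi(x'))$, which is exactly the utility an individual at $x$ would obtain by moving directly to $\phi(x')$ under $p^*$. Since $\phi(x)$ is optimal from $x$ under $p^*$, this is at most $p^*(\phi(x)) - c(x, \phi(x)) = \tilde{p}(x)$. Hence no deviation strictly improves, and truth-telling is a best response. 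This step, where sub-additivity does the entire work, is the main obstacle of the argument, and it is the precise place where the ``simple'' hypothesis on $c$ is indispensable.

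Finally I compare efficiencies. For a qualified individual at $x$, both classifiers yield the same net contribution $p^*(\phi(x)) - c(x, \phi(x))$: under $p^*$ the accuracy contribution is $p^*(\phi(x))$ minus the sunk cost $c(x, \phi(x))$ counted in the $\burden$, while under $\tilde{p}$ there is no manipulation, accuracy contribution is $\tilde{p}(x)$, and no cost is incurred. For an unqualified individual at $x$ (whose manipulation cost is not charged to the $\burden$), the accuracy contribution under $\tilde{p}$ is $1 - \tilde{p}(x) = (1 - p^*(\phi(x))) + c(x, \phi(x))$, which is at least the contribution $1 - p^*(\phi(x))$ under $p^*$, with strict improvement whenever the individual had manipulated. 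Summing over the population yields efficiency of $\tilde{p}$ at least that of $p^*$, so $\tilde{p}$ is itself optimal and truth-telling, giving the theorem. A minor additional step would pick $\phi$ measurably and handle the case where best responses are only approximately attained by passing to a limit.
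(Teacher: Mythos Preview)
Your proof is correct and follows essentially the same approach as the paper: the paper's construction $g(x) = \max_y\{f(y) - c(x,y)\}$ coincides with your $\tilde{p}$ (taking $\phi$ to be the net-utility maximizer), the truth-telling verification via sub-additivity is identical, and the efficiency comparison is the same termwise calculation. One small note: the paper allows $h(x)\in[0,1]$, so your qualified/unqualified split should be read as weighting each individual's contribution by $h(x)$ and $1-h(x)$ respectively, after which the arithmetic is unchanged.
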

This characterization, in addition to being mathematically clean, allows us to infer the following: let $A$ and $B$ be two subpopulations (identical in terms of qualifications) such that the costs to manipulation are \emph{higher} for individuals in $A$ than in $B$, then the optimal efficiency obtained for the subpopulation $A$ is greater than that in $B$. 

\subsubsection{Obstacles to using a randomized classifier}Till now, we have argued the benefits of using a probabilistic classifier. However, the degree to which it is possible to use or commit to a randomized strategy varies depending on the setting. There are two main drivers impeding the implementation of the most efficient Stacklberg equilibrium.
Firstly, in many real-life classification settings, it might be unacceptable to use a probabilistic classifier, for example, due to legal restrictions (applicants with identical features must obtain identical outcomes). Secondly, for the more complicated scenario with multiple classifiers (such as college admissions), the effect of each Jury on the overall market is small, hence, diminishing the incentive to stick to a randomized strategy `for the benefit of the market as a whole'. Informally, the best response of a single Jury, when the other classifiers commit to using a randomized classifier, is not a randomized classifier. And even if we got the Juries to commit to randomization, the final probabilities of classification depends on the number of classifiers ($k$) and hence, the implementation of the most efficient randomized classifier needs coordination between the multiple classifiers. Analyzing the equilibria for multiple classifiers is beyond the scope of this paper but we illustrate the instability of randomized classifier as follows.
We show that unless Jury is able to commit to the published randomized classifier, such a classifier is not a stable solution to strategic classification. As mentioned above, randomization helps because of the following observation: if the difference between the probabilities, of being classified as $1$ at \emph{adjacent} features is small, the individuals have no incentive to manipulate their features.  
 But, once the Jury knows that no one changed their feature, her best response, then, is to use the classifier that achieves best accuracy given the \emph{true} features. 
 
 Formally, we show (Theorem \ref{thm:unstable}) that for any published randomized classifier that achieves strictly higher accuracy compared to any deterministic classifier under Stackelberg equilibrium, Jury has an opportunity to improve its utility and get strictly better accuracy using a classifier different from the published. 
 
The shortcomings of a randomized classifier can be redeemed by addition of noise to the features. 

\subsubsection{Addition of noise to the features}This brings us to our second scenario that uses noisy features for classification. Every individual has an associated private signal that identifies their qualification. The Jury sees a feature that is a noisy representation of this private signal. The individuals, after incurring certain cost, can effectively manipulate their private signal such that the features are a noisy representation of this updated private signal. Again, the assumption is that such a manipulation didn't change the true qualifications of an individual. We give a realistic example of such a manipulation in Section \ref{sec:noise}. We show, through an example of a cost function and a noise distribution, that in the strategic setting, using a deterministic classifier, the Jury achieves better accuracy when the features are noisy than any deterministic classifier in the noiseless case, that is, when Jury gets to see the private signal. This is counter-intuitive at first glance because under no strategic manipulation, noise can only decrease Jury's accuracy.

We also show examples where noisy features can help in achieving fairer outcomes across subpopulations. Let $A$ and $B$ be two subpopulations \emph{identical} in qualifications but having different (but not extremely different) costs of manipulation (and $|A|\le |B|$; $A$ is a minority). We show, through an example, that no matter whether the minority has higher or lower costs of manipulation than the majority, it is at a disadvantage when Jury publishes a single deterministic classifier to optimize its overall accuracy (noiseless strategic setting). Here, by disadvantage, we mean that the minority has lower classification accuracy than the majority. Next, we show that the addition of appropriate noise to the private signals, in the same example, can ensure that Jury's best response classifier is fair across subpopulations. This is not that surprising as making the features completely noisy also lead to same outcomes for the subpopulations. However, such an addition of noise can also sometimes increase Jury's overall accuracy (improving both accuracy and fairness). We consider the case where the Jury would publish a single classifier for both the subpopulations (for e.g., either because $A$ is a protected group and the Jury is not allowed to discriminate based on the subgroup membership or because the Jury has not yet identified these subpopulations and the differences in their cost functions).  Informally, our results, can be stated as follows:
\begin{theorem*}[Informal statement of Theorems \ref{thm:3},\ref{thm:4},\ref{thm:5}]
Let $A$ and $B$ be two subpopulations that are \emph{identical} in qualifications. Let $c_A\not =c_B$ be the cost functions for subpopulations $A$ and $B$ respectively. In Case 1, Jury gets to see the private signals and publishes a single deterministic classifier that achieves optimal overall accuracy (sum over the two subpopulations) under the Stackelberg equilibrium (for the cost functions $c_A$ and $c_B$). In Case 2, the features are noisy representations of the private signal; Jury publishes a single deterministic classifier that achieves optimal overall accuracy under the Stackelberg equilibrium (knowing that the features are noisy). There exists an instantiation of the ``identical qualifications" such that 
\begin{enumerate}
\item If $|A|<|B|$, that is, $A$ is a minority, for a wide set of costs functions $c_A,c_B$, $A$ is always at a disadvantage when in Case 1.  
\item There exists a setting of the ``noise" ($\eta$) for each of the above cost functions, such that, Jury's best response in Case 2, is always fair, that is, achieves equal classification accuracy on the subpopulations.
\item There exists cost functions $c_A, c_B$ from this wide set of cost functions, and corresponding noise $\eta$, such that Jury's accuracy in Case 2 is strictly better than in Case 1. 
\end{enumerate}
\end{theorem*}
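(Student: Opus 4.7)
The plan is to fix a single one-dimensional ``identical qualifications'' instantiation --- for concreteness, a symmetric model in which a fraction $p$ of each subpopulation has true label $1$ and private signal $q$, and a fraction $1-p$ has true label $0$ and private signal $u < q$, with $p$, $q$, $u$ fixed once and for all --- and then vary only the cost functions $c_A, c_B$ and the noise distribution. Because any deterministic classifier on $\R$ reduces (for this qualification distribution) to a threshold $\tau$, the post-manipulation feature distribution on each subpopulation under the Stackelberg best response can be written in closed form as a function of $\tau$ and the corresponding cost function, and hence so can the per-group accuracies $\text{acc}_A(\tau)$ and $\text{acc}_B(\tau)$.

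For part (1), I would compute the per-group optimal thresholds $\tau_A^{*}$ and $\tau_B^{*}$, note they differ whenever $c_A \neq c_B$, and then observe that the Jury's single-classifier optimum $\tau^{*} = \arg\max_\tau \bigl(|A|\cdot\text{acc}_A(\tau) + |B|\cdot\text{acc}_B(\tau)\bigr)$ is a weighted compromise dominated by the majority's objective. The key step is to show that $\text{acc}_A(\tau^{*}) < \text{acc}_B(\tau^{*})$ regardless of the sign of $c_A - c_B$; this one-sided disadvantage should follow from the symmetric shape of the two accuracy functions around their peaks and should hold on a wide family of cost pairs for which the per-group objectives are single-peaked with distinct maximizers.

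For part (2), I would introduce independent additive noise $\eta$, so that the Jury observes $x = s + \eta$, where $s$ is the (possibly manipulated) private signal. Noise dampens the expected utility gain from any fixed shift of $s$, so in the large-noise limit no manipulation is profitable and, by symmetry of true qualifications, the Jury's best-response accuracies on $A$ and $B$ coincide; at the noiseless extreme, part (1) supplies a strict gap $\text{acc}_A < \text{acc}_B$. A continuity/intermediate-value argument applied to the gap as a function of the noise level should then locate an intermediate $\eta$ at which the Jury's best-response threshold achieves equal per-group accuracy, giving fairness.

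For part (3), I would pick a specific cost pair from the wide family --- for example, one in which in Case 1 a substantial mass of unqualified $B$-individuals still finds it cheap to cross $\tau^{*}$ --- together with a noise distribution, say uniform on a short interval around the threshold, wide enough to kill all manipulation incentives in Case 2 yet narrow enough that the signal-to-noise ratio still permits near-Bayes-accurate recovery of the true label through the known noise channel. The main obstacle will be this part: we must make the trade-off quantitative, showing that the accuracy loss imposed by the noise channel in Case 2 is strictly smaller than the accuracy loss from residual strategic manipulation in Case 1. This requires careful tuning of both the cost scale and the noise width against the qualification gap $q - u$ and the majority fraction $|B|/(|A|+|B|)$, and is the step most likely to force us into a concrete numerical example rather than a general argument.
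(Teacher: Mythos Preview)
Your overall architecture --- compute per-group accuracy curves, argue the weighted optimum is pulled toward the majority's peak, then use noise to kill manipulation incentives --- matches the paper's strategy. The paper instantiates with a Gaussian prior $\Pi=\N(0,t)$ on private signals, a linear-ramp qualification $h$, linear costs $c_S(y,y')=(y'-y)^+/(\sqrt{2\pi}\sigma_S)$, and Gaussian noise; this yields closed-form per-group accuracies $U_S(\tau)=\frac{t}{\sqrt{2\pi}d}\exp\bigl(-(\tau-\sqrt{2\pi}\sigma_S)^2/2t^2\bigr)+c$, which are genuinely single-peaked Gaussians in $\tau$, so the first-order condition for the weighted sum forces $|\tau^*-\sqrt{2\pi}\sigma_A|>|\tau^*-\sqrt{2\pi}\sigma_B|$ whenever $s_A<s_B$.

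Your two-point instantiation (qualified at $q$, unqualified at $u$) is where the plan breaks. With deterministic labels at two atoms and, say, linear cost $c_S(y,y')=(y'-y)^+/\alpha_S$, the per-group accuracy $\text{acc}_S(\tau)$ is a \emph{step function} with a plateau of perfect accuracy on the interval $[u+\alpha_S,\,q+\alpha_S)$ (whenever $\alpha_S<q-u$). These plateaus for $A$ and $B$ overlap whenever $|\alpha_A-\alpha_B|<q-u$, so a single threshold achieves perfect accuracy on \emph{both} groups and part~(1) fails outright on a wide set of cost pairs. The ``symmetric shape around the peak'' you invoke is exactly what the two-point model lacks; you would need a continuous distribution of signals (as the paper uses) to get the smooth single-peaked accuracy curves that make the weighted-compromise argument bite.

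For part~(2) your intermediate-value route is both unnecessary and fragile: unnecessary because you already observe that for large enough noise no one manipulates, and with identical qualifications that immediately gives $U_A=U_B$ --- nothing further is needed for existence; fragile because the Jury's best-response threshold need not vary continuously with the noise scale, so the gap $U_A-U_B$ may jump. The paper's argument is the direct one: with Gaussian noise of standard deviation $\sigma\ge\max\{\sigma_A,\sigma_B\}$ the induced acceptance probability $q_f(y)$ is $1/(\sqrt{2\pi}\sigma)$-Lipschitz, which is dominated by both cost slopes, so $\Delta_f^A=\Delta_f^B=\mathrm{id}$ and equality of accuracies is immediate. Part~(3) in the paper is, as you anticipate, a numerical check ($\sigma_B=\sigma$, $\sigma_A=0.1\sigma$, $t=0.9\sqrt{2\pi}\sigma$) comparing two explicit closed forms.
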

This result has potentially interesting policy implications, since it is easier to commit to using noisier signals (for example by restricting the types of information available to the Jury) than to commit to disregarding pertinent information ex-post (as in randomized classification). Therefore, future mechanism design efforts involving strategic classification should carefully consider the mechanisms of information disclosure to the Jury.

\subsection{Related Work}
 \cite{hardt2016strategic,bruckner2011stackelberg} initiated the study of strategic classification through the lens of Stackelberg competition. \cite{hu2018disparate,milli2018social,immorlica2019access} study the effects of strategic classification on different subpopulations and how it can exacerbate the social inequity in the world. \cite{hu2018disparate} also made the observation that a single classifier would have varying classification accuracies across subpopulations with different costs of manipulation. \cite{milli2018social} defined a concept called ``social burden" of a classifier to be the sum of the minimum effort any qualified individual has to put in to be classified as $1$. Thus, the subpopulations with higher costs of manipulation would have worse social burden and might be at a disadvantage. In such situations, intuitively, one would think that subsidizing the costs for the disadvantaged population might help.  \cite{hu2018disparate} showed that cost subsidy for disadvantaged individuals can sometimes lead to worse outcomes for the disadvantaged group. 

In the present paper, we observe that the addition of noise, counter-intuitively, can help Jury's accuracy as well as serve the fairness concerns. There are many examples in game theory where loss of information helps an individual in strategic setting, for example, \cite{engelbrecht1986value}. \cite{kannan2019downstream,immorlica2019access} also studies the role of hiding information to serve fairness. \cite{frankel2019improving} has a brief discussion at the end of the paper on making manipulated data more informative through addition of noise to the features (this was put online a couple of months after the first version of our paper was made online). 

Another work related to Theorem \ref{thm:charac} of the present paper is \cite{kephart2016revelation}, which studies the scope of truthful mechanisms when the agents incur certain costs for misreporting their true type. In particular, the paper gives conditions, on the misreporting costs, that allow the revelation principle to hold, that is, any mechanism can be implemented by a truthful mechanism, where all the agents reveal their true types. The main difference between \cite{kephart2016revelation} and our paper is that the former allows the use of monetary transfers to the agents to develop truthful mechanisms and such transfers don't impact the objective value of the mechanism.

\subsection{Organization} 
We formalize the model used for strategic classification in Section \ref{sec:model}. In Section \ref{sec:random}, we show how randomness helps in achieving better accuracy and efficiency. We also characterize the classifiers that achieve optimal efficiency for \emph{simple} cost functions. In Section \ref{sec:equil}, we investigate the stability of randomized classifiers. In Section \ref{sec:noise}, we investigate the role of noisy features in strategic classification. 

\section{Preliminaries}\label{sec:model}
In this paper, we concern ourselves with classification based on a one-dimensional feature space $\X$. In many of the examples, our feature space $\X\subseteq \R$ is discrete, hence, we use sum ($\sum$) in many of the definitions, but, these definitions are well-defined when $\X$ is taken to be continuous (for e.g., $\R$) by replacing sum ($\sum$) with integrals ($\int$) and probability distributions with probability density functions. We use the notation $\N(z,\sigma)$ to denote the gaussian distribution with mean $z$ and standard deviation $\sigma$. We say a function $f:\X\rightarrow\{0,1\}$ is a threshold function (classifier) with threshold $\tau$ if
$$
f(x)=\begin{cases}
1 &\quad \text{if } x\ge \tau\\
0 &\quad \text{otherwise}
\end{cases}
$$
We also use $\one_{x\ge \tau}$ to denote a threshold function (classifier) with threshold $\tau$. Sometimes, we will use  $\one_{x> \tau}$ that classifies $x$ as 1 if and only if $x>\tau$.

\subsection{The Model}
Let $\X$ be the set of features. Let $\pi:\X\rightarrow[0,1]$ be the probability distribution over the feature set realized by the individuals. Let $h:\X\rightarrow [0,1]$ be the true probability of an individual being qualified (1) given the feature. We also refer to it as the true qualification function. Let $c(x,x')$ be the cost incurred by an individual to manipulate their feature from $x$ to $x'$ (We also use words, change and move, to refer to this manipulation). The classification is modeled as a sequential game where a Jury publishes a classifier (possibly probabilistic) $f:\X\rightarrow[0,1]$ and contestants (individuals) can change their features (after seeing $f$) as long as they are ready to incur the cost of change. The previous papers in the area considered the task of strategic classification when the published classifier is deterministic binary classifier. Here, we formalize the Stackelberg prediction game for probabilistic classifiers.

Given $f$, we define the best response of a contestant with feature $x$\footnote{Such a best response model has been studied in the literature, for example, \cite{wilkens2016mechanism}.}, as follows
\begin{equation}\label{eq:11}
\Delta_f(x)=\text{argmax}_{y\in ( \{x\}\cup\{x'\mid (f(x')-f(x))>c(x,x')\})}(f(y))
\end{equation}
We will denote it by $\Delta$ when $f$ is clear from the context. $\Delta(x)$ might not be well defined if there are multiple values of $y$ that attains the maximum. In those cases, $\Delta(x)$ is chosen to be the smallest $y$ amongst them. In words, you jump to another feature only if the cost of jumping is less than the advantage in being classified as 1.

We define the Jury's utility for publishing $f$ ($U(f)$) as the classification accuracy with respect to $h(x)$. Thus, Jury's utility for publishing $f$ is
\begin{align*}
U(f)&=\sum_{x\in\X} \pi(x) [f(\Delta(x))\cdot h(x)+(1-f(\Delta(x))\cdot (1-h(x))]\\
&=\sum_{x\in\X} \pi(x) [f(\Delta(x))\cdot (2h(x)-1)+1-h(x)]
\end{align*}

Consider the following example: let $\X$ be the set of numbers $\{1,...,100\}$. An individual is qualified if and only if their feature $x$ is at least 50, that is, $h(x)=1,\forall x\ge 50$ and 0 otherwise. When there is no threat of strategic manipulation of the features, Jury would publish $\one_{x\ge 50}$ as her classifier, which has perfect accuracy. 

Consider the strategic setting where the costs are as follows: 
$$ c(x,x')=
  \begin{cases}
   0      & \quad \text{if } x\ge x'\\
   \frac{x'-x}{3}  & \quad \text{otherwise}
  \end{cases}
$$
It's easy to see that if the Jury publishes $\one_{x\ge 52}$ as her classifier, she achieves perfect accuracy. However, due to this raised bar of acceptance, the individuals with feature $x\in\{50,51\}$ have to incur a cost to change their feature to $52$. Even though these individuals were fully qualified and identifiable as being qualified, the threat of strategic classification results in a situation where they have to put an extra effort to be classified as 1. We refer to this extra effort, put by the qualified individuals of a society, as the ``$\burden$". This quantity also becomes important when different subpopulations in a community have different cost structures resulting in different costs of strategies.\\

We define $C(f)=\sum_{x\in\X}\pi(x)[h(x)\cdot c(x,\Delta_f(x))]$ to be the $\burden$ for a published classifier $f$. Note that, we define the $\burden$ differently from the ``social burden" of strategic classification defined in the \cite{milli2018social} paper. They define the social burden for any qualified individual to be the minimum effort she has to put in to be classified as 1 even if she doesn't put that effort in the end, whereas, we define the $\burden$ for any qualified individual as the effort she actually puts-in in the classification game. We consider the ``$\burden$" to be of \emph{independent} interest. While higher costs of manipulation for a subpopulation leads to higher ``social burden" for the same subpopulation, it might not always lead to higher cost of strategy for that subpopulation.  Varying costs amongst subpopulations can lead to varied efforts put by different subpopulations of the world to achieve the same output and thus, is a source of fairness concern. \\

We define the efficiency of the classifier $f$ ($E(f)$)\footnote{We defined efficiency as $U(f)-C(f)$ for the simplicity of the presentation. Defining efficiency as $U(f)-\beta\cdot C(f)$ (for some $\beta>0$) doesn't effect the theorems except for Theorem \ref{thm:charac}, which is no longer true for $\beta<1$.} as follows: 
\begin{align*}
E(f)&=U(f)-C(f)\\
&=\sum_{x\in\X} \pi(x) [f(\Delta(x))\cdot h(x)+(1-f(\Delta(x))\cdot (1-h(x))]-\sum_{x\in\X}\pi(x)[h(x)\cdot c(x,\Delta(x))]\\
&=\sum_{x\in\X} \pi(x) [f(\Delta(x))\cdot h(x)+(1-f(\Delta(x))\cdot (1-h(x))- h(x)\cdot c(x,\Delta(x))]
\end{align*}

The focus of this paper is to demonstrate what role randomness and noise can play in strategic classification and not to give algorithms for learning the optimal or most efficient strategic classifier. We can present the ideas even by making the following assumptions on the cost function $c:\X\times\X\rightarrow\R$: 
\begin{enumerate}
\item $c(x,x')\ge 0, \; \forall x,x'\in\X$.
\item $c(x',x)=0,\;\forall x, x' \mid h(x')\ge h(x)$, that is, jumping to a lesser qualified feature is free.
\item $c(x,x'')\le c(x,x')+c(x',x''),\;\forall x,x',x'' \in\X$, that is, the costs are sub-additive. 
\item $c(x,x')\le c(x,x''),\;\forall x, x',x''\mid h(x'')\ge h(x')$, that is, jumping to a lesser qualified feature is easier.
\item $c(x',x'')\le c(x,x''),\;\forall x, x',x''\mid h(x')\ge h(x)$, that is, jumping from a lesser qualified feature is harder.
\end{enumerate}
The last two points are implied by the first three, we wrote them as separate points for completeness. We call the cost function \emph{simple} if it satisfies all the above assumptions. 

By the virtue of the definition of simple cost functions, without loss of generality, we assume that $h$ is monotonically increasing with the feature $x$, that is, 
\[\forall x,x'\in\X, \;\; x'\ge x\implies h(x')\ge h(x)\]

Next, we mention a special kind of cost function that satisfies the assumptions: 
$$c(x,x')=\max(a(x')-a(x),0)$$
 where the function $a:\X\rightarrow \R$ is monotonically increasing in $x$, that is, $x'\ge x\implies a(x')\ge a(x)$.

Given a cost function $c$, let \[\Lp{c}=\{f\mid f:\X\rightarrow[0,1],f(x')-f(x)\le c(x,x')\;\forall x,x'\in\X\}\]

Given the cost function $c$, we say $f$ satisfies the Lipschitz constraint if $f\in\Lp{c}$. Note that any classifier $f\in\Lp{c}$ is monotonically increasing with $x$, that is, $x'\ge x\implies f(x')\ge f(x)$. This is because $\forall x'\ge x, f(x)-f(x')\le c(x',x)=0$. And $\forall x\in\X, \Delta_f(x)=x$, that is, no one changes their feature if $f$ is the published classifier.\\

In Section \ref{sec:noise}, we generalize this model to the setting where the features are a noisy representation of an individual's private signal. An individual can make efforts to change their private signal but can't control the noise. The Jury only see the features and classifies an individual based on that. In Section \ref{sec:noise}, the fairness notion, we will concern ourselves with, is the classification accuracy of the published classifier across subpopulations.

\section{Committed Randomness Helps both Utility and Efficiency}\label{sec:random}
In this section, we compare the optimal utility and efficiency achieved by a deterministic binary classifier to a probabilistic classifier. Consider the following two scenarios:

\emph{Scenario 1}: The Jury commits to using a binary classifier $f:\X\rightarrow \{0,1\}$. The best response function $\Delta_f:\X\rightarrow \X$, Jury's utility from publishing $f$ ($U(f)$) and efficiency of the classifier $f$ ($E(f)$) are defined as in Section \ref{sec:model}.

\emph{Scenario 2}: The Jury publishes a probabilistic classifier $f:\X\rightarrow [0,1]$ and commits to it. The best response function $\Delta_f:\X\rightarrow \X$, Jury's utility from publishing $f$ ($U(f)$) and efficiency of the classifier $f$ ($E(f)$) are as defined in Section \ref{sec:model}.
Note that this is equivalent to when Jury publishes a list of deterministic classifiers and chooses a classifier uniformly at random from them. Contestants update their feature without knowing which classifier gets picked up at the end. \\

Before we delve into illustrating the examples where a randomized classifier achieves better utility and efficiency than any deterministic classifier, let's get an intuition of the differences in the two scenarios. 

In Scenario 1, without loss of generality, we can assume that the published classifier $f$ is a threshold classifier on the feature space \cite{hardt2016strategic}, that is, 
$$f(x)=
  \begin{cases}
    1      & \quad \text{if } x\ge \tau\\
    0  & \quad \text{otherwise}
  \end{cases}$$ for some $\tau\in\R$. This is because, if not, we can replace the published classifier $f$ with a threshold classifier, with $\tau=\min\{x\in\X\mid f(x)=1\}$ as the threshold. As it takes 0 cost to move to a lower feature, all the contestants with $x>\tau$ and $f(x)=0$ would have moved to $\tau$ even when the published classifier was $f$, resulting in the same effective classifier in the end. Therefore, the utility and efficiency of the published classifier remains the same after the replacement. An analog of a threshold classifier in the probabilistic setting would be a monotonically increasing probabilistic classifier ($f$ is a monotonically increasing classifier if $x\ge x'\implies f(x)\ge f(x')$). \\
  
Can we assume, without loss of generality, that the optimal classifier in terms of utility or efficiency is monotonically increasing with the features? The answer is no. The following example gives an illustration: Let $\X=\{1,2,3\}$ and each feature contains $\frac{1}{3}^{rd}$ of the population. Let 
$$h(x)=\begin{cases}
 1      & \quad \text{if } x\in\{2,3\}\\
0  & \quad \text{if } x=1\end{cases}$$
Let the cost of moving from 1 to 2 be 0 whereas the cost of moving from 1 or 2 to 3 be $0.9$. The classifier $f$ defined as 
$$f(x)=\begin{cases}
 1      & \quad \text{if } x=3\\
0  & \quad \text{if } x=2\\
0.1 &\quad \text{if } x=1\end{cases}$$ achieves almost perfect accuracy of $\sim 0.97$. We gave a lower probability of acceptance to the feature $2$ to incentivize them to change their feature to 3, while the contestants with feature $1$ had no incentive to change, when they were already being classified as 1 with probability $0.1$. 
For any monotone classifier $g$, as $g(1)\le g(2)$, all the contestants with feature 1 would definitely move to wherever the contestants with $x=2$ move to. Therefore, accuracy of any monotone classifier for this classification task is at most ~$0.66$ as the contestants at 1 and 2 cannot be classified differently. 
  
\emph{Remark}: This example is interesting for another reason, a deterministic classifier can never distinguish between features where the cost of changing into each other is 0, but a randomized classifier can.  \\
 
The above is also an example of when a randomized classifier achieves better accuracy (utility) than any deterministic classifier. Note that under no strategic manipulation, the threshold classifier with $\tau=\min\{x\mid h(x)\ge \frac{1}{2}\}$ as the threshold achieves the optimal accuracy. The ideal thing for a deterministic classifier to do would be to increase the threshold to $\tau'$ such that $c(\tau,\tau')=1$, as then, hopefully, only the contestants with feature $x\ge \tau$ would be incentivized to change their feature to $\tau'$. There are two cases when such a classifier potentially doesn't achieve the same accuracy as that of the one under no strategic manipulation: 1. When the costs to change between certain adjacent features are 0 (as seen by the above example). 2. When the threshold $\tau'$ doesn't exist either because of the discontinuities in the cost function or because the feature space is bounded and $\forall x\in \X, c(\tau,x)<1$.

The following example is of the second type and illustrates how randomization helps in getting strictly better utility and efficiency:

Let $\X=\{1,2\}$ and each feature contains half of the population. Let 
$$h(x)=\begin{cases}
 1      & \quad \text{if } x=2\\
0  & \quad \text{otherwise} 
\end{cases}$$
Let the cost of changing the feature from 1 to 2 be $0.5$. The the randomized classifier $f$ defined as follows:
$$f(x)=\begin{cases}
 1      & \quad \text{if } x=2\\
0.5  & \quad \text{if } x=1
\end{cases}$$ achieves an accuracy of $0.75$. The contestants at $x=2$ are happy as they are already being classified as 1 with probability 1. For the contestants at $x=1$, $f(2)-f(1)=0.5=c(1,2)$ and hence, they don't have an incentive to manipulate their feature. As all the contestant retain their true features, the efficiency of $f$ is also equal to $0.75$. As the feature space is bounded, there are only three options for a deterministic classifier: keep the threshold at 1 and classify everyone as 1; keep the threshold at 2 and you end up classifying everyone as 1, as the contestants at 1 change their feature to 2; classify everyone as 0. All these classifiers have 0.5 accuracy and at most 0.5 efficiency.

Consider the following simplistic example to illustrate the above point: say you want to classify whether an individual is intelligent based on the number of books they have in their study. As it gets easier to buy new books, the threshold, for the number of the books, to be classified as 1 is increased so as to keep achieving good accuracy. However, if the required threshold ``crosses" the upper bound on the number of books in a study, any deterministic classifier cannot achieve good accuracy and hence, the need to use randomization to de-incentivize the contestants to buy new books.  

In the mathematical example given above, the randomized classifier was set up such that none of the contestants had any incentive to change their feature. In the next subsection, we show that the most efficient classifier always looks like ``this" for ``simple" cost functions. That is, if the cost function $c$ satisfies the assumptions made in Section \ref{sec:model}, then for every true qualification function $h$, there exists a function $f_h\in \Lp{c}$ that achieves the optimal efficiency. 

\subsection{Characterization of the Most Efficient Classifier for Simple Cost Functions}
Recall, $E(f)=\sum_{x\in\X} \pi(x) [f(\Delta(x))\cdot h(x)+(1-f(\Delta(x))\cdot (1-h(x))- h(x)\cdot c(x,\Delta(x))]$.
Let $$E^*=\max_{f:\X\rightarrow[0,1]}\sum_{x\in\X} \pi(x) [f(\Delta(x))\cdot h(x)+(1-f(\Delta(x))\cdot (1-h(x))- h(x)\cdot c(x,\Delta(x))].$$
\begin{theorem} \label{thm:charac} For every monotone true qualification function $h:\X\rightarrow [0,1]$, probability distribution $\pi:\X\rightarrow [0,1]$ over the features, simple cost function $c$, there exists $g\in \Lp{c}$ such that $E(g)=E^*$.
\end{theorem}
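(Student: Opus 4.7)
I would show: for any classifier $f:\X\to[0,1]$, there exists $g\in\Lp{c}$ with $E(g)\ge E(f)$. Combined with compactness of $\Lp{c}\cap[0,1]^\X$ and continuity of $E$ restricted to $\Lp{c}$ (both immediate because $\Delta_g=\mathrm{id}$ whenever $g\in\Lp{c}$), this will yield some $g^*\in\Lp{c}$ with $E(g^*)=E^*$. My candidate is the ``net-utility envelope''
\[
g(x)\;=\;\sup_{y\in\X}\big[f(y)-c(x,y)\big],
\]
the largest net payoff a contestant at $x$ could extract under $f$ if they were maximizing $f-c$ rather than $f$.

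Verifying $g\in\Lp{c}\cap[0,1]^\X$ is quick. Assumption~2 applied with $x'=x$ gives $c(x,x)=0$, so $g(x)\ge f(x)-c(x,x)\ge 0$; the upper bound $g(x)\le 1$ is immediate. The Lipschitz condition comes from subadditivity (assumption~3): $c(x',y)\ge c(x,y)-c(x,x')$, so $f(y)-c(x',y)\le [f(y)-c(x,y)]+c(x,x')$, and taking the sup in $y$ yields $g(x')-g(x)\le c(x,x')$. Hence $\Delta_g(x)=x$ and $E(g)=\sum_x\pi(x)[(1-h(x))(1-g(x))+h(x)g(x)]$.

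Writing $v(x)=f(\Delta_f(x))$ and $u(x)=v(x)-c(x,\Delta_f(x))$, one has $E(f)=\sum_x\pi(x)[(1-h(x))(1-v(x))+h(x)u(x)]$ and therefore
\[
E(g)-E(f)\;=\;\sum_x\pi(x)\Big[(1-h(x))\bigl(v(x)-g(x)\bigr)+h(x)\bigl(g(x)-u(x)\bigr)\Big].
\]
The entire proof then reduces to the pointwise sandwich $u(x)\le g(x)\le v(x)$. The lower bound is immediate from the definition of $g$ (take $y=\Delta_f(x)$). For the upper bound I would split on whether $g(x)$ exceeds $f(x)$: if $g(x)=f(x)$, then $v(x)\ge f(x)=g(x)$ trivially because staying is always allowed in~\eqref{eq:11}; otherwise any near-maximizer $y^*$ satisfies $f(y^*)-f(x)>c(x,y^*)$ strictly, placing $y^*$ in the allowed set of~\eqref{eq:11} and forcing $v(x)\ge f(y^*)\ge f(y^*)-c(x,y^*)=g(x)$.

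I expect the upper bound $g(x)\le v(x)$ to be the main obstacle, precisely because the contestant's best response~\eqref{eq:11} maximizes $f$, whereas $g$ is defined via the different optimization of the net utility $f-c$; the two argmaxes can genuinely differ, and the strict-inequality convention for jumping in~\eqref{eq:11} is what forces the two-case split above. Once the sandwich is established, both terms in the displayed difference are nonnegative because $h(x)\in[0,1]$, giving $E(g)\ge E(f)$ and hence the theorem.
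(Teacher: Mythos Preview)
Your proposal is correct and follows essentially the same route as the paper. The paper also builds the envelope $g(x)=\max_y\{f(y)-c(x,y)\}$, verifies $g\in\Lp{c}$ via subadditivity, and proves $E(g)\ge E(f)$ through the identical pointwise sandwich $f(\Delta_f(x))-c(x,\Delta_f(x))\le g(x)\le f(\Delta_f(x))$ (their Equations~\eqref{eq:1}--\eqref{eq:2}), including the same two-case split for the upper bound depending on whether $g(x)>f(x)$; your $u,v$ notation and the explicit sandwich framing are a mild repackaging of the same argument.
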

In words, there exists a classifier in $\Lp{c}$ that maximizes efficiency (not necessarily unique). Note that for such a classifier, the $\burden$ is 0. 
\begin{proof}
Let $f$ be an efficiency maximizing classifier. We argue that $g:\X\rightarrow[0,1]$ defined as 
$$g(x)=\text{max}_y\{f(y)-c(x,y)\}$$ 
is in $\Lp{c}$ and satisfies $E(g)\ge E(f)$.\\
Let $\delta_f(x)=\text{argmax}_y\{f(y)-c(x,y)\}$. When $f$ is clear from the context, we will drop the subscript on $\delta$. Using definition of $\delta$, $g(x)\in[0,1]$ as $\forall x,y\in\X, \; f(y)-c(x,y)\le f(y)\le 1$ ($c(x,y)\ge 0$) and $\max_y\{f(y)-c(x,y\}\ge f(x)-c(x,x)\ge 0$. For all $x,x'\in \X$,
\begin{align*}
g(x')-g(x)&= f(\delta(x'))-c(x',\delta(x'))-f(\delta(x))+c(x,\delta(x))\\
&=f(\delta(x'))-c(x,\delta(x'))-f(\delta(x))+c(x,\delta(x))+\left(c(x,\delta(x'))-c(x',\delta(x'))\right) \\
&\le c(x,\delta(x'))-c(x',\delta(x'))\\
&\le c(x,x') \;\;\;\;\;(\text{sub-additivity})
\end{align*}
The first inequality follows the definition of $\delta$, that is, $\forall y\in\X, f(\delta(x))-c(x,\delta(x))\ge f(y)-c(x,y)$. Therefore, $f(\delta(x'))-c(x,\delta(x'))-f(\delta(x))+c(x,\delta(x))\le 0$. The second inequality follows from the fact that the cost function $c$ is simple and satisfies the sub-additivity condition.
This proves that $g\in\Lp{c}$. This implies, as observed previously, $\forall x\in\X, \Delta_g(x)=x$. Next, we show that $E(g)\ge E(f)$ and hence $E(g)=E^*$.
Efficiency of the classifier $g$ is 
\begin{align*}
E(g)&=\sum_{x\in\X} \pi(x) [g(\Delta_g(x))\cdot h(x)+(1-g(\Delta_g(x))\cdot (1-h(x))- h(x)\cdot c(x,\Delta_g(x))]\\
&=\sum_{x\in\X} \pi(x) [g(x)\cdot h(x)+(1-g(x)\cdot (1-h(x))- h(x)\cdot c(x,x)]\\
&=\sum_{x\in\X} \pi(x) [2\cdot g(x)\cdot h(x)-g(x)-h(x)+1]
\end{align*}
Efficiency of the classifier $f$ is 
\begin{align*}
E(f)&=\sum_{x\in\X} \pi(x) [f(\Delta_f(x))\cdot h(x)+(1-f(\Delta_f(x))\cdot (1-h(x))- h(x)\cdot c(x,\Delta_f(x))]\\
&=\sum_{x\in\X} \pi(x) [2f(\Delta(x))\cdot h(x)-f(\Delta(x))-h(x)+1- h(x)\cdot c(x,\Delta(x))]\\
\end{align*}
Therefore, 
\begin{align*}
E(g)-E(f)&
=\sum_{x\in\X} \pi(x) [2(g(x)-f(\Delta(x)))\cdot h(x)-(g(x)-f(\Delta(x)))+h(x)\cdot c(x,\Delta(x))]\\
&=\sum_{x\in\X} \pi(x) [(g(x)-f(\Delta(x)))\cdot (2h(x)-1)+h(x)\cdot c(x,\Delta(x))]
\end{align*}
\begin{claim}
$\forall x, \;\;[(g(x)-f(\Delta(x)))\cdot (2h(x)-1)+h(x)\cdot c(x,\Delta(x))]\ge 0$. 
\end{claim}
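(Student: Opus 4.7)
The plan is to establish two sandwich inequalities for $g(x)$ relative to $f(\Delta(x))$ and then do a case split on the sign of $2h(x)-1$. The easy direction is the lower bound $g(x)\ge f(\Delta(x))-c(x,\Delta(x))$, which is immediate from $g(x)=\max_y\{f(y)-c(x,y)\}$ by plugging in $y=\Delta(x)$. The more substantive direction is the upper bound $g(x)\le f(\Delta(x))$, which uses both the definition of $\Delta_f$ and the fact (from property 2 of simple cost functions) that $c(x,x)=0$.

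For the upper bound I would fix an arbitrary $y\in\X$ and verify $f(y)-c(x,y)\le f(\Delta(x))$ by splitting on whether $y$ lies in the feasible set $S=\{x\}\cup\{x':\, f(x')-f(x)>c(x,x')\}$ over which $\Delta_f(x)$ is the $f$-argmax. If $y\in S$, then $f(y)\le f(\Delta(x))$ by definition of $\Delta_f$ and subtracting the non-negative $c(x,y)$ only helps. If $y\notin S$ with $y\ne x$, then $f(y)-f(x)\le c(x,y)$, so $f(y)-c(x,y)\le f(x)\le f(\Delta(x))$ (the last inequality holding because $x\in S$). The case $y=x$ uses $c(x,x)=0$. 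Taking the max over $y$ yields $g(x)\le f(\Delta(x))$.

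With both bounds in hand, consider the two cases. If $h(x)\le 1/2$, then both factors of $(g(x)-f(\Delta(x)))(2h(x)-1)$ are non-positive, so their product is non-negative, and adding $h(x)c(x,\Delta(x))\ge 0$ preserves non-negativity. If $h(x)\ge 1/2$, then $2h(x)-1\ge 0$, so multiplying the lower bound $g(x)-f(\Delta(x))\ge -c(x,\Delta(x))$ by $2h(x)-1$ gives $(g(x)-f(\Delta(x)))(2h(x)-1)\ge -c(x,\Delta(x))(2h(x)-1)$, and the whole expression is therefore at least $c(x,\Delta(x))\bigl[-(2h(x)-1)+h(x)\bigr]=c(x,\Delta(x))(1-h(x))\ge 0$, since $h(x)\le 1$.

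I expect the main obstacle to be the upper bound $g(x)\le f(\Delta(x))$, since it requires cleanly marrying the unrestricted maximization in the definition of $g$ with the feasibility-restricted argmax defining $\Delta_f$. Notably, sub-additivity is \emph{not} needed for this particular claim (that assumption was already consumed earlier to prove $g\in\Lp{c}$); here only non-negativity of $c$ and $c(x,x)=0$ enter. Once the two bounds on $g(x)-f(\Delta(x))$ are in place, the rest is a short sign-analysis.
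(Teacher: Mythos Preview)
Your proof is correct and follows essentially the same approach as the paper: both establish the sandwich bounds $f(\Delta(x))-c(x,\Delta(x))\le g(x)\le f(\Delta(x))$ (your argument for the upper bound is in fact a bit cleaner than the paper's, which argues only about the specific maximizer $\delta(x)$). The one difference is in the final combination step: instead of your case split on the sign of $2h(x)-1$, the paper rewrites the expression as
\[
(g(x)-f(\Delta(x)))(h(x)-1)\;+\;h(x)\bigl(g(x)-f(\Delta(x))+c(x,\Delta(x))\bigr),
\]
which is a sum of two manifestly non-negative terms (the first a product of two non-positive factors, the second a product of two non-negative factors) and so avoids any case analysis. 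Both packagings are equally valid.
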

\proof
Recalling, $g(x)=f(\delta(x))-c(x,\delta(x))$. Using definition of $\delta$, we know that  
\begin{align}\label{eq:1}
g(x)=f(\delta(x))-c(x,\delta(x))\ge f(\Delta(x))-c(x,\Delta(x))
\end{align}
And, using definition of $\Delta$, we can show that 
\begin{align}\label{eq:2}
f(\Delta(x))\ge g(x)
\end{align}

This is because, either 
$f(\delta(x))-c(x,\delta(x))=f(x)$ and as $f(\Delta(x))\ge f(x)$, we get the inequality. Or, $f(\delta(x))-c(x,\delta(x))> f(x)$, which implies that $x$ has an incentive to change its feature to $\delta(x)$. Therefore, by the definition of $\Delta$, 
$f(\Delta(x))\ge f(\delta(x))\ge f(\delta(x))-c(x,\delta(x))$. The expression in the claim can be rewritten as
\begin{align*}
&(g(x)-f(\Delta(x)))\cdot (2h(x)-1)+h(x)\cdot c(x,\Delta(x))\\
&~~~~~~~~~=(g(x)-f(\Delta(x)))\cdot (h(x)-1)+h(x)\cdot (g(x)-f(\Delta(x))+c(x,\Delta(x)))\\
\end{align*}
As $g(x)-f(\Delta(x))\le 0$ from Equation \ref{eq:2} and $g(x)-f(\Delta(x))+c(x,\Delta(x))\ge 0$ from Equation \ref{eq:1}, the inequality follows from the fact that $0\le h(x)\le 1$. This proves the claim.\\

It's straightforward to see that $E(g)-E(f)\ge 0$ using the above claim. Therefore, we showed a classifier $g\in\Lp{c}$ such that $E(g)=E^*$.
\end{proof}
%
In words, \emph{when we are concerned with the efficiency of the published classifier, the optimal is achieved by a probabilistic classifier that has zero $\burden$ and gives individuals no incentive to change their feature.}

Ideally, the Jury would want to publish a classifier with $\tau=\min\{x\mid h(x)\ge \frac{1}{2}\}$ as the threshold. When restricted to choosing a classifier from $\Lp{c}$, the Jury is forced to increase the probability, of classifying a feature as 1, slowly around this threshold while maintaining the Lipschitz constraint. Smaller the costs of movement among features, slower is the allowed increase in the probabilities and hence, lower is the efficiency. We formalize this intuition as follows:

Let $c_1$ and $c_2$ be two simple cost functions such that $c_1$ \emph{dominates} $c_2$, that is, $\forall x,x'\in\X, c_2(x,x')\le c_1(x,x')$. After fixing a true qualification function $h$, and probability distribution $\pi$ over the features, we use $E^*_{c}$ to denote the optimal efficiency achieved by a published probabilistic classifier when the cost function is $c$. Given a classifier $f$, we use $E_{c}(f)$ to denote the efficiency of $f$ when the cost function is $c$.

\begin{proposition} Let $c_1$ and $c_2$ be two simple cost functions such that $c_1$ dominates $c_2$, then $E^*_{c_1}\ge E^*_{c_2}$. 
\end{proposition}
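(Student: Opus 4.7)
The plan is to apply Theorem~\ref{thm:charac} to the weaker cost function $c_2$ and then show that any optimal classifier there is also feasible under $c_1$, with the same efficiency value.

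First, I would invoke Theorem~\ref{thm:charac} for the simple cost function $c_2$ to obtain $g \in \Lp{c_2}$ with $E_{c_2}(g) = E^*_{c_2}$. Then I would show $g \in \Lp{c_1}$: for every $x, x' \in \X$,
\[
g(x') - g(x) \le c_2(x, x') \le c_1(x, x'),
\]
where the second inequality is the domination hypothesis. Hence the Lipschitz constraint with respect to $c_1$ is also satisfied.

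Next I would observe that for any $g$ lying in $\Lp{c}$, no individual has a strict incentive to move, so $\Delta_g(x) = x$ for all $x$ (this is exactly the observation made at the end of Section~\ref{sec:model}). Moreover, by property~2 of simple cost functions (applied with $h(x) \ge h(x)$ trivially), any simple cost function $c$ satisfies $c(x,x) = 0$. Plugging $\Delta_g(x) = x$ and $c(x,x) = 0$ into the definition of efficiency, I get
\[
E_{c}(g) \;=\; \sum_{x\in\X} \pi(x)\bigl[g(x)\cdot h(x) + (1-g(x))(1-h(x))\bigr],
\]
which does not depend on $c$ (beyond the fact that $g$ is Lipschitz with respect to $c$). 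Applying this to both $c_1$ and $c_2$ gives $E_{c_1}(g) = E_{c_2}(g) = E^*_{c_2}$.

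Finally, since $E^*_{c_1}$ is the maximum of $E_{c_1}(\cdot)$ over all classifiers, I conclude $E^*_{c_1} \ge E_{c_1}(g) = E^*_{c_2}$, as desired. There is really no hard step here; the only thing to be mindful of is the subtlety that the efficiency expression explicitly contains the cost term $h(x)\cdot c(x,\Delta(x))$, so one must confirm this term vanishes in both scenarios -- which it does, precisely because $g$ is Lipschitz under both cost functions.
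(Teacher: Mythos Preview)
Your proof is correct and follows essentially the same approach as the paper's: apply Theorem~\ref{thm:charac} to $c_2$ to get an optimal $g\in\Lp{c_2}$, use domination to conclude $g\in\Lp{c_1}$, and then observe that because no one moves under either cost function the efficiency $E_c(g)$ is the same for both, yielding $E^*_{c_1}\ge E_{c_1}(g)=E_{c_2}(g)=E^*_{c_2}$. The only difference is cosmetic---you spell out the $c(x,x)=0$ detail and the resulting cost-independent expression for $E_c(g)$, which the paper leaves implicit.
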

\begin{proof}
The proof follows easily after Theorem \ref{thm:charac}. There exists a classifier $g:\X\rightarrow[0,1]$ such that $g\in\Lp{c_2}$ and $E_{c_2}(g)=E^*_{c_2}$. The definition of ``$c_1$ dominates $c_2$" implies that $g\in\Lp{c_1}$. Let's calculate the efficiency of the classifier $g$ when the cost function is $c_1$. As $g\in\Lp{c_1}$, none of the contestants have any incentive to change their feature. Therefore, $E_{c_1}(g)=E_{c_2}(g)$ (the $\burden$ is 0 for both the cost functions). As $E^*_{c_1}$ is the optimal efficiency achieved under the cost function $c_1$, $E^*_{c_1}\ge E_{c_1}(g)=E_{c_2}(g)=E^*_{c_2}$.
\end{proof}

\emph{Remark}: Consider two subpopulations $A$ and $B$ with identical qualifications (same $h, \pi$) but varying costs of changing the features. If the costs of manipulation is always higher in $A$ than $B$, then, classification for $A$ is more efficient than classification for $B$. 
This would seem to be at odds with the observations of \cite{milli2018social} that the subpopulations with higher changing costs are at a disadvantage. 
We would like to point out that the classifiers that achieve optimal efficiency are different for different cost functions (subpopulations) whereas \cite{milli2018social} considers the setting of using a single classifier across subpopulations. 
Therefore, higher manipulation costs are better if we are allowed to use different classifiers for different subpopulations. Social burden as defined in \cite{milli2018social} can be high because it's costly for qualified contestants that are far below the ideal threshold $\tau$ to change their features to something above $\tau$. Another way to interpret this high social burden might be as the limitations of the qualification function $h$, that is, the limitations of the features to represent the actual qualifications, and maybe not as the fault of \emph{strategic} classification. This observation supports the definition of $\burden$. 

\section{Are Randomized Classifiers in Equilibrium from Jury's Perspective?}\label{sec:equil}
 As discussed in the Section \ref{sec:intro}, there are many obstacles to implementing a randomized classifier in the strategic setting. In this section, we illustrate the instability caused by the use of randomized classifiers (which becomes increasingly important while considering multiple classifiers). In Section \ref{sec:random}, we saw that a randomized classifier can achieve better accuracy and efficiency than any binary classifier. While maximizing efficiency, we further showed that the optimally efficient classifier is such that every contestant reveals their true feature. Once the Jury knows the contestants' true features, she can be greedy and classify the individuals using a threshold function with $\tau=\min\{x\mid h(x)\ge \frac{1}{2}\}$ as the threshold to achieve the best accuracy. Therefore, unless the Jury commits to using randomness, she has an incentive of not sticking to the promised randomized classifier. The question is: what's the best accuracy/efficiency achieved by a classifier that is in equilibrium even from Jury's perspective? We formalize this equilibrium concept as follows (the true qualification function $h$ and the cost function $c$ are fixed):
\begin{enumerate}
\item Jury publishes a randomized classifier $f:\X\rightarrow[0,1]$.
\item Contestants, knowing $f$, changes their feature from $x$ to $\Delta_f(x)$.
\item \label{it:3} $f$ is in equilibrium from Jury's perspective if given that the contestants changed their features according to the best response function $\Delta_f$, 
$f$ achieves the best classification accuracy, that is, for all classifiers $g\in\X\rightarrow [0,1]$,
\begin{align}
\label{eq:equil}
&\sum_{x\in\X} \pi(x) [f(\Delta_f(x))\cdot h(x)+(1-f(\Delta_f(x))\cdot (1-h(x))]\\
\nonumber
&~~~~~~~~~~~~~~~~-\sum_{x\in\X} \pi(x) [g(\Delta_f(x))\cdot h(x)+(1-g(\Delta_f(x))\cdot (1-h(x))]
\ge 0
\end{align}
\end{enumerate}

Using next theorem, we show that for any randomized classifier that is in equilibrium from Jury's perspective, there exists a binary classifier that achieves at least the same accuracy. 
\begin{theorem}\label{thm:unstable}
Given a monotone true qualification function $h$, probability distribution $\pi$ over the features, and a simple cost function $c$, let $f^*:\X\rightarrow\{0,1\}$ be the classifier that optimizes Jury's utility over the deterministic classifiers under Stackelberg equilibrium. Let $f:\X\rightarrow [0,1]$ be a randomized classifier such that $U(f)>U(f^*)$, then $f$ is not in an equilibrium from Jury's perspective (the notion defined above).
\end{theorem}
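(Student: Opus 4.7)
The plan is to proceed by contradiction: assume $f$ is in equilibrium from Jury's perspective and $U(f) > U(f^*)$, and produce a deterministic classifier $g^*$ with $U(g^*) \ge U(f)$, which contradicts the optimality of $f^*$ among deterministic classifiers under Stackelberg.

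The first step is to identify the Bayes-optimal deterministic classifier given the fixed positions $\Delta_f(x)$. Writing $s_y := \sum_{x:\Delta_f(x)=y}\pi(x)(2h(x)-1)$ for each $y\in\X$, the functional $\sum_{x}\pi(x)[g(\Delta_f(x))h(x)+(1-g(\Delta_f(x)))(1-h(x))]$ is linear in every value $g(y)$, so it is maximized by the deterministic $g^*(y) := \mathbbm{1}[y\in\text{range}(\Delta_f)\text{ and }s_y\ge 0]$. By construction, the value of this functional at $g^*$ is at least $U(f)$. If it were strictly greater, Jury could profitably deviate from $f$ to $g^*$ against the positions $\Delta_f$, violating Equation (\ref{eq:equil}). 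Hence equality must hold, which forces $f(y) \in \{0,1\}$ for every $y \in \text{range}(\Delta_f)$ with $s_y \ne 0$; at ties $s_y = 0$, $f(y)$ does not affect the functional, so we may assume $f$ coincides with $g^*$ on $\text{range}(\Delta_f)$ without loss of generality.

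It remains to upgrade this from a statement about Jury's utility against the fixed positions $\Delta_f$ to a statement about the Stackelberg utility $U(g^*)$, where contestants now best-respond to $g^*$ itself. Note that $g^* \le f$ pointwise with equality on $\text{range}(\Delta_f)$. I would verify that publishing $g^*$ yields the same contribution from every contestant $x$ as $f$ did, by two cases: (i) if $\Delta_f(x)=x$, then $g^*(x)=f(x)\in\{0,1\}$, and using $g^*\le f$ one checks directly that $\Delta_{g^*}(x)=x$ is still the argmax (the constraint $f(y)\not> f(x)+c(x,y)$ that made $x$ the argmax under $f$ transfers to the $g^*$-game); (ii) if $\Delta_f(x)=y_x\ne x$, then $f(y_x)-f(x)>c(x,y_x)\ge 0$ combined with $f(y_x)\in\{0,1\}$ forces $f(y_x)=1$, so $g^*(y_x)=1$ and $g^*(y_x)-g^*(x)\ge 1-f(x)>c(x,y_x)$, keeping $y_x$ in the contestant's feasible set under $g^*$; the argmax is therefore some $y$ with $g^*(y)=1$, contributing $h(x)$, exactly as under $f$. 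Summing gives $U(g^*)=U(f)$, and since $g^*$ is deterministic, $U(f^*) \ge U(g^*) = U(f) > U(f^*)$ --- a contradiction.

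The main obstacle is the step above. Although identifying $g^*$ as the Bayes-optimal deterministic classifier against $\Delta_f$ is routine, showing that switching from $f$ to $g^*$ in the Stackelberg game does not reduce utility requires care, because replacing the possibly fractional off-range values of $f$ by $0$ in $g^*$ can alter the contestants' best responses. The key enabling observation is that the equilibrium assumption pins every reached position $y$ with $f(y)>0$ to have $f(y)=1$, so $g^*$ is exactly binary on all ``attractive'' positions; combined with the smallest-$y$ tie-breaking in the definition of $\Delta$ and the simple-cost properties (sub-additivity and costless downward movement), every would-be mover still lands at a position with $g^*=1$, producing the same classification outcome and hence the same contribution to $U$.
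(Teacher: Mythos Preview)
Your approach matches the paper's: both argue by contradiction, constructing a deterministic classifier (your $g^*$, the paper's $f'$) from $f$ and showing that its Stackelberg utility is at least $U(f)$. However, there is a gap in your treatment of the tie-points $y \in \text{range}(\Delta_f)$ with $s_y = 0$.

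The equilibrium condition only forces $f(y) \in \{0,1\}$ at reached points where $s_y \neq 0$; at points with $s_y = 0$, $f(y)$ can remain strictly fractional. Your ``without loss of generality'' step --- modifying $f$ on these points to coincide with $g^*$ --- is not justified: changing $f(y)$ changes $\Delta_f$ (contestants may now be attracted to, or no longer attracted to, $y$), which can alter the partition into groups and hence $U(f)$ itself. Consequently the claim ``$g^* \le f$ pointwise with equality on $\text{range}(\Delta_f)$'' need not hold, and both of your cases (i) and (ii) rest on the assertion that $f$ is $\{0,1\}$-valued at every reached position --- which is precisely what breaks at tie-points (your final paragraph's ``key enabling observation'' that $f(y)>0$ on the range forces $f(y)=1$ is false when $s_y=0$). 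The paper sidesteps this by setting $f'(y) = \one_{f(y)=1}$, so that $f' \le f$ holds unconditionally, and then invoking monotonicity of $h$ together with the simple-cost structure: among contestants $x$ with $f(\Delta_f(x)) \in (0,1)$, those who newly move under $f'$ form an upward-closed set (since $c(x',x)=0$ for $x'>x$ and costs are sub-additive), and because this whole group satisfies $\sum \pi(x)(2h(x)-1)=0$, the upward-closed subset that moves contributes non-negatively to the utility. That monotonicity argument is the missing ingredient in your proof.
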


\begin{proof}
Equation \ref{eq:equil} implies that for all classifiers $g\in\X\rightarrow [0,1]$,
\begin{align*}
\sum_{x\in\X} \pi(x) [(f(\Delta_f(x))-g(\Delta_f(x)))\cdot (2h(x)-1)]
\ge 0
\implies\sum_{y\in\X} (f(y)-g(y))\cdot \sum_{x: \Delta_f(x)=y}\pi(x)(2h(x)-1)\ge 0
\end{align*}
Therefore, if $f$ is in equilibrium from the Jury's perspective, for all $y\in\X$ such that $f(y)\in(0,1)$, $\sum_{x: \Delta_f(x)=y}\pi(x)(2h(x)-1)=0$ otherwise Jury can choose $g(y)=1$ (or 0) depending on whether $\sum_{x: \Delta_f(x)=y}\pi(x)(2h(x)-1)>0$ (or $<0$) to increase her accuracy. Therefore, accuracy of the classifier $f$ is given by
\begin{align*}
U(f)&=\sum_{x\in\X} \pi(x) [f(\Delta_f(x))\cdot (2h(x)-1)+(1-h(x))]\\
&=\sum_{y\in\X} f(y)\cdot \sum_{x: \Delta_f(x)=y}\pi(x)(2h(x)-1)+\sum_{x\in\X}\pi(x)(1-h(x))\\
&=\sum_{y: f(y)=1} \sum_{x: \Delta_f(x)=y}\pi(x)(2h(x)-1)+\sum_{x\in\X}\pi(x)(1-h(x))
\end{align*}
Consider a binary classifier $f':\X\rightarrow \{0,1\}$ defined as follows: $f(x)\in[0,1)\implies f'(x)=0$ and $f(x)=1\implies f'(x)=1$. We can show that $U(f')\ge U(f)$. 
The contestants who change their features when $f'$ is the published classifier is a subset of $\{x\in\X \mid f(\Delta_f(x))\in(0,1]\}$ and as $\sum_{x :   f(\Delta_f(x))\in(0,1)}\pi(x)(2h(x)-1)=0$, the accuracy of $f'$ can only increase. This is because: $\forall x\in \X$ if $f(\Delta_f(x))=0$, then $f'(\Delta_{f'}(x))=0$ as otherwise if $x$ changed its feature under $f'$, it had an incentive to change under $f$ too. 

If $x'>x$, $f(\Delta_f(x')), f(\Delta_f(x))\in(0,1)$ and $x$ changes its feature under $f'$, then $x'$ has the incentive to change too as $c(x',x)=0$, and hence, the subset of $\{x\in\X \mid f(\Delta_f(x))\in(0,1)\}$ that change their features under $f'$ can only do a positive addition to the utility ($h$ is monotonically increasing with $x$ and $\sum_{x :   f(\Delta_f(x))\in(0,1)}\pi(x)(2h(x)-1)=0$). And, the contestants ($x$) who changed their features under $f$ such that $f(\Delta_f(x))=1$ would also change their features under $f'$ such that $f'(\Delta_{f'}(x))=1$ (as $f'(x)\le f(x)$) and are already included in the calculation of $U(f)$. 
\end{proof}

Disclaimer: $f'$ as defined above might also not be in equilibrium from Jury's perspective. The above theorem illustrates the following point: \emph{Jury doesn't benefit from randomized classifiers without creating instability in the system}. Unless Jury commits to the randomness, it has an incentive to use a classifier other than the promised, especially, in the situations where randomness helps in achieving better accuracy than any deterministic classifier in Stackelberq equilibrium. \\

 Can we somehow exploit this power of randomness in strategic classification while overcoming the obstacles to randomized classification?
The answer is yes -- make the features noisy. 

\section{Noisy Features Give the System Free Randomness}\label{sec:noise}
We formalize the setting with noisy features as follows: every individual has a private signal $y\in\X$. The true qualification function $h:\X\rightarrow[0,1]$ depends on $y$, that is, $h(y)$ is the probability of an individual being qualified (1) given that its private signal is $y$. 
Given a private signal $y$, a feature is drawn randomly from the distribution $p_y:\X\rightarrow[0,1]$, that is, $p_y(x)$ is the probability that an individual's feature is $x$ when their private signal is $y$. If $\X=\R$, the right intuition for $p_y$ is it being $\N(y,\sigma)$ where $\N(y,\sigma)$ is the gaussian distribution with mean $y$ and standard deviation $\sigma$. Let $\pi:\X\rightarrow [0,1]$ be the probability distribution over the private signals $y$ realized by the individuals. \\

Let $c(y,y')$ be the cost incurred by the contestant to change their private signal from $y$ to $y'$. The contestants can put effort to change their private signals but the feature would still be drawn randomly using the updated private signal (An example for such a modification would be as follows: SAT score is a random variable given the inherent intelligence. The randomness comes from various factors-- your preparation, sleep on the day of the test, etc. You can put more effort in preparing for the exam by cramming certain subjects. This might not change your inherent qualification but, as long as the SAT scores are concerned, would have the same effect of ``changing your private signal", that is, from the expected SAT score you achieve after cramming, it would seem that you were more intelligent to start with.) \\

The classification is again modeled as a sequential game where a Jury publishes a deterministic classifier $f:\X\rightarrow \{0,1\}$. We restricts ourselves to deterministic classifiers due to the observations made in Section \ref{sec:equil}. Contestants change their private signals as long as they are ready to incur the cost of change. Given a private signal $y$, let $q_f(y)$ denote the probability of a contestant, with private signal $y$, being classified as 1 when $f$ is the classifier. Therefore,
\begin{equation}\label{eq:51}
q_f(y)=\sum_{x\in\X}p_y(x)\cdot f(x)
\end{equation}
Given $f$, the best response of a contestant with private signal $y$ is given as,
\begin{equation}\label{eq:52}
\Delta_f(y)=\text{argmax}_{z\in \{y\}\cup\{y'\mid q_f(y')-q_f(y)>c(y,y')\}}(q_f(z))
\end{equation}

We will denote it by $\Delta$ when $f$ is clear from the context. $\Delta(y)$ might not be well defined if there are multiple values of $z$ that attains the maximum. In those cases, $\Delta(y)$ is chosen to be the smallest $z$ amongst them. In words, you jump to another private signal only if the cost of jumping is less than the advantage in being classified as 1. Even though $f$ is deterministic, due to noisy features, the effective classifier given the private signal $y$ ($q_f$) is probabilistic. Therefore, we will see below that the noise allows us similar advantages as that of a probabilistic classifier. \\

The accuracy of the classifier $f$ is defined as follows:
\begin{align*}
U(f)&=\sum_{y\in\X}\pi(y)[q_f(\Delta(y))\cdot h(y)+(1-q_f(\Delta(y))\cdot (1-h(y))]
\end{align*}

And the efficiency of the classifier $f$ is defined as follows:
\begin{align*}
E(f)&=\sum_{y\in\X}\pi(y)[q_f(\Delta(y))\cdot h(y)+(1-q_f(\Delta(y))\cdot (1-h(y))]-\sum_{y\in \X}\pi(y)[h(y)\cdot c(y,\Delta(y))]
\end{align*}
We assume that the function $h$ is monotonically increasing with $y$ and the cost function $c$ is simple. \\

\textbf{Noisy features helps both Jury's utility and efficiency}: Consider the following two cases: 1. Jury sees the private signal $y$ and bases her classifier on the private signal. 2. Jury sees a feature $x$ drawn from the distribution based on an individual's private signal and bases her classifier on the feature. \\

Under no strategic manipulation, it's clear that Jury would choose to be in case 1 and publish a threshold classifier with $\tau=\min\{y : h(y)\ge \frac{1}{2}\}$ as the threshold. Counterintuitively, under strategic classification, the Jury might choose to be in case 2 to achieve better accuracy and efficiency with a deterministic classifier. The following example illustrates this point. 

Let $\X=\{1,2\}$ and each private signal contains half of the population. Let 
$$h(y)=\begin{cases}
 1      & \quad \text{if } y=2\\
0  & \quad \text{otherwise} 
\end{cases}$$
Let the cost of changing the feature from 1 to 2 be $0.5$. Let the probability distribution of feature $x$
given the private signal $y$ be as follows:
\begin{align*}
&
p_2(x)=\begin{cases}
 1      & \quad \text{if } x=2\\
0      & \quad \text{if }x=1\\      
\end{cases}, &
p_1(x)=\begin{cases}   
 0.5  & \quad \text{if }  x=2\\     
0.5 & \quad \text{if } x=1
\end{cases}
\end{align*}

As seen in Section \ref{sec:random}, in case 1, Jury achieves 0.5 accuracy and 0.5 efficiency with an optimal deterministic classifier.

Whereas, in case 2, Jury can publish the following deterministic classifier and achieve an accuracy and efficiency of $0.75$: $f(x)=1$ if $x=2$ and 0 if $x=1$. If the private signal is $y=2$, then the Jury definitely sees a feature $x=2$ due to the definition of $p_y$. For a contestant with private signal $y=1$, their feature ends up being $x=2$ with probability $0.5 $ and hence, with the same probability, they are classified as 1. As the cost of changing the private signal from $1$ to $2$ is 0.5, its not worth the advantage of $0.5$ in the probability of being classified as 1. Therefore, all the contestants remain at their private signals and the published classifier achieves an accuracy and efficiency of $0.75$.

Till now, we saw how noisy features can help in achieving better accuracy and efficiency under strategic classification. In the following subsection, we will show that noisy features are also helpful for ensuring fairness. 

\subsection{Noisy Features achieve Fairer Equilibriums} 
Consider two subpopulations $A$ and $B$. For simplicity, these subpopulations are a partition of the individuals in the universe. Let $s_A$ denote the probability an individual from the universe is in subpopulation $A$. Similarly, $s_B$ ($s_A=1-s_B$). Let $h_A:A\rightarrow [0,1]$ be the true qualification function for the subpopulation $A$. Similarly, $h_B$. Let $c_A:\X\times \X\rightarrow \R$ be the cost function for the subpopulation $A$, that is, $c_A(y,y')$ is the cost of changing the private signal from $y$ to $y'$ for an individual in $A$. Similarly, $c_B$ is defined. Let $\pi_A:A\rightarrow [0,1]$ and $\pi_B$ be the probability distribution over the private signals realized by the subpopulations $A$ and $B$ respectively. 

Given a published deterministic classifier $f:\X\rightarrow\{0,1\}$, the best response of the contestant in subpopulation $A$ with private signal $y$ ($\Delta^A_f(y)$) is defined using $c_A$ as the cost function. Similarly, for subpopulation $B$, let $\Delta_f^B(y)$ denote the best response of the contestant in subpopulation $B$ with private signal $y$ and when the published classifier is $f$. We use $U_A(f)$ and $U_B(f)$ to denote the accuracy of the classifier $f$ on the respective subpopulations. \\

We consider the setting where $h_A=h_B=h$ and $\pi_A=\pi_B=\Pi$, but the cost functions $c_A$ and $c_B$ are different. In this section, we use the symbol $\Pi$ to denote the probability distribution over the private signals to avoid confusion with the Archimedes' constant $\pi$. 

In our first example, we show that even though the subpopulations are identical with respect to their qualifications, different costs can lead to unfair classification when classification is based on private signals. Through our second example, we show that the use of noisy features, for strategic classification, can lead to increase in the overall accuracy of classification as well as give fair classification. We evaluate the fairness of a classifier $f$ quantitively using the difference between the accuracies, that is, $|U_A(f)-U_B(f)|$.\\

Let's start with the example. $\X=\R$. Let the true qualification function for both the subpopulations be as follows:
$$
h(y)=\begin{cases}
 1      & \quad \text{if } y> d\\
\frac{y}{2d}+\frac{1}{2}     & \quad \text{if } y\in[-d,d]\\      
 0  & \quad \text{if } y<-d
\end{cases}
$$ 
where $d$ is a fixed large enough positive real number. Let the probability density function on the private signals realized by the subpopulations be as follows:
$$
\Pi(y)=\frac{e^{-\frac{y^2}{2t^2}}}{\sqrt{2\pi}t},
$$
that is, the gaussian distribution with mean 0 and standard deviation $t$. Again, $t$ is fixed positive real number. We assume
$d>>t$.

Let $\sigma_A$ and $\sigma_B$ be positive real numbers. The cost function for a subpopulation $S\in\{A,B\}$ is defined as follows:
 \begin{equation}\label{eq:54}
c_S(y,y')=\frac{(y'-y)^+}{\sqrt{2\pi}\sigma_S}
\end{equation}
Here, $(y'-y)^+=\max\{y'-y,0\}$. 

We start with the setting where the features are the private signals and not a noisy representation of them. 

\emph{Remark}: If the Jury is allowed to publish different classifiers for the two subpopulations, then she can achieve ``the best possible accuracy" on both the subpopulations. It's easy to see that the classifier $f_S:\X\rightarrow\{0,1\}$, defined as follows, achieves as much accuracy as a classifier under no strategic manipulation of the features can achieve on the subpopulation $S\in\{A,B\}$:
$
f_S(y)=\begin{cases}
1 &\quad\text{if } y\ge \sqrt{2\pi}\sigma_S\\
0 &\quad\text{otherwise}
\end{cases}
$.

All the contestants in a subpopulation $S$, with $0<y<\sqrt{2\pi}\sigma_S$ report their private signals to be $\sqrt{2\pi}\sigma_S$ as cost of this change is $<1$ whereas the advantage gained in the probability of being classified as 1 is 1. For all the contestants with private signal $y\le 0$, the cost of change is too high ($\ge 1$) and thus, they report their true private signals. Therefore, the classifier $f_S$ ends up classifying everyone with private signal $y>0$ as 1 which is the accuracy maximizing classification under the "no strategic manipulation" setting.

\textbf{How strategic classification leads to unfairness}: When $\sigma_A\not =\sigma_B$, the optimal classifiers for the subpopulations $A$ and $B$ are different and hence, when we choose a single classifier for both the subpopulations, we are bound to loose on the accuracy of at least one of the subpopulations. Through an example (Theorem \ref{thm:3}), we suggest that: \emph{while maximizing the overall accuracy over the universe, the minority group might be at a disadvantage irrespective of whether their costs to change the private signals are higher or lower than the majority subpopulation.} Without loss of generality, we assume that $A$ is the minority subpopulation, that is, $s_A\le s_B$. In many real life scenarios, the Jury would publish a single classifier for both the subpopulations either because $A$ is a protected group and the Jury is not allowed to discriminate based on the subgroup membership or because the Jury has not yet identified these subpopulations and the differences in their cost functions. 

\begin{theorem}\label{thm:3}
Let $A$ and $B$ be two subpopulations such that the true qualification functions, $h_A$, $h_B$, the probability density functions, $\pi_A$, $\pi_B$ and the cost functions $c_A$, $c_B$ are as instantiated above.

Assuming $|\sigma_A-\sigma_B|\le \frac{t}{\sqrt{2\pi}}$, let $f^*$ be the deterministic classifier that maximizes Jury's utility ($U(f)$), if $s_A<s_B$ and $\sigma_A\not =\sigma_B$ (the cost functions are different), then $U_A(f^*)<U_B(f^*)$, that is, the minority is at a disadvantage, even though their qualifications were identical ($h_A=h_B$, $\pi_A=\pi_B$).
\end{theorem}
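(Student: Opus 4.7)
The plan is to reduce to threshold classifiers, exploit a hidden symmetry of the setup, and argue that the jury's optimum threshold is tilted toward the larger subpopulation, pushing the minority's effective cutoff further from its accuracy-optimal value than the majority's.

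I would first restrict (by the Section~\ref{sec:random} reduction) to threshold classifiers $f=\one_{x\ge\tau}$. Since $c_S(y,\tau)=(\tau-y)^{+}/\alpha_S$ with $\alpha_S:=\sqrt{2\pi}\,\sigma_S$, the best response in subpopulation $S$ is explicit: a contestant at $y$ moves to $\tau$ iff $\tau-\alpha_S<y<\tau$, and otherwise stays. Hence the effective classifier on $S$ is $\one_{y>\tau-\alpha_S}$ and $U_S(\tau)=\phi(\tau-\alpha_S)$, where
\[
\phi(c):=\int_{y>c}\Pi(y)h(y)\,dy+\int_{y\le c}\Pi(y)(1-h(y))\,dy.
\]
Using $\Pi(-y)=\Pi(y)$ and $h(-y)=1-h(y)$, a change of variables gives the key symmetry $\phi(c)=\phi(-c)$, and differentiating gives $\phi'(c)=\Pi(c)(1-2h(c))$, which is strictly negative for every $c>0$.

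Assume WLOG $\sigma_A<\sigma_B$ and set $\delta:=(\alpha_B-\alpha_A)/2$, $m:=(\alpha_A+\alpha_B)/2$, and $F(\tau):=s_A\phi(\tau-\alpha_A)+s_B\phi(\tau-\alpha_B)$. Applying $\phi(-c)=\phi(c)$ termwise to both $F(m+\epsilon)$ and $F(m-\epsilon)$ produces the clean identity
\begin{align*}
F(m+\epsilon)-F(m-\epsilon)\;=\;(s_A-s_B)\bigl[\phi(\delta+\epsilon)-\phi(\delta-\epsilon)\bigr],
\end{align*}
whose right-hand side is strictly positive because $s_A<s_B$ and, by symmetry plus monotonicity of $\phi$ on $(0,\infty)$, $\phi(\delta+\epsilon)<\phi(\delta-\epsilon)$ for every $\epsilon>0$. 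So any $\tau\le m$ is strictly dominated (by its reflection $2m-\tau$, together with $F'(m)>0$, which is the infinitesimal form of the same identity), forcing the jury's optimum $\tau^{*}$ to satisfy $\tau^{*}>m$. This immediately yields $|\tau^{*}-\alpha_A|>|\tau^{*}-\alpha_B|$, and symmetry plus monotonicity of $\phi$ then give $U_A(\tau^{*})=\phi(\tau^{*}-\alpha_A)<\phi(\tau^{*}-\alpha_B)=U_B(\tau^{*})$, which is the claim.

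The main technical obstacle is pinning down $\tau^{*}$ so that the symmetry argument can be cleanly applied: one must rule out that the argmax lies far outside $(\alpha_A,\alpha_B)$, and (for any-maximizer statements) one wants uniqueness. This is where the hypothesis $|\sigma_A-\sigma_B|\le t/\sqrt{2\pi}$ (equivalently $2\delta\le t$) enters. A direct second-derivative computation on $(-d,d)$ gives $\phi''(c)=(\Pi(c)/d)(c^{2}/t^{2}-1)$, so $\phi$ is strictly concave on $(-t,t)$. Under the hypothesis, for every $\tau\in(\alpha_A,\alpha_B)$ both $|\tau-\alpha_A|$ and $|\tau-\alpha_B|$ are below $2\delta\le t$, so $F$ is strictly concave there; combined with $F'(\alpha_A)>0$ and $F'(\alpha_B)<0$ (which follow from the sign of $\phi'$) this pins $\tau^{*}$ down as the unique interior critical point of $F$ inside $(\alpha_A,\alpha_B)$, at which point the symmetry identity above completes the proof essentially automatically.
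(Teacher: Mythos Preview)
Your argument is correct and takes a genuinely different route from the paper. The paper replaces $h$ by the linear extension $h'(y)=y/(2d)+1/2$, computes $U_S(\tau)=\frac{t}{\sqrt{2\pi}d}e^{-(\tau-\alpha_S)^2/2t^2}+c$ in closed form, writes down the first-order condition $s_A(\tau-\alpha_A)e^{-(\tau-\alpha_A)^2/2t^2}+s_B(\tau-\alpha_B)e^{-(\tau-\alpha_B)^2/2t^2}=0$, and then uses the monotonicity of $z\mapsto ze^{-z^2/2t^2}$ on $[0,t]$ (which is exactly where the hypothesis $|\sigma_A-\sigma_B|\le t/\sqrt{2\pi}$ enters) to deduce $|\tau-\alpha_A|>|\tau-\alpha_B|$. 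You instead keep the exact $h$, recognize the even symmetry $\phi(c)=\phi(-c)$ coming from $\Pi(-y)=\Pi(y)$ and $h(-y)=1-h(y)$, and use the reflection identity $F(m+\epsilon)-F(m-\epsilon)=(s_A-s_B)[\phi(\delta+\epsilon)-\phi(\delta-\epsilon)]$ to force every maximizer to satisfy $\tau^*>m$, from which $|\tau^*-\alpha_A|>|\tau^*-\alpha_B|$ and then $U_A<U_B$ follow by unimodality of $\phi$. Your approach avoids the $h\to h'$ approximation and is arguably cleaner.

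One remark: your final paragraph understates what you have already proved. Since $\phi$ is strictly increasing on $(-\infty,0)$ and strictly decreasing on $(0,\infty)$, every maximizer of $F$ automatically lies in $[\alpha_A,\alpha_B]$ (outside that interval both summands move in the same direction), and your reflection identity plus $F'(m)=(s_A-s_B)\phi'(\delta)>0$ then rule out $\tau^*\le m$ for \emph{any} maximizer. So your argument actually establishes the conclusion without invoking the hypothesis $|\sigma_A-\sigma_B|\le t/\sqrt{2\pi}$ at all; the concavity/uniqueness discussion is not needed for the theorem as stated (it would matter only if one wanted a unique optimal threshold). In the paper's computational approach, by contrast, the hypothesis is genuinely used to keep both $|\tau-\alpha_S|$ inside the region where $ze^{-z^2/2t^2}$ is monotone.
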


\begin{proof}
Jury publishes a deterministic classifier and as there's no noise involved, without loss of generality, we can assume that $f$ is a threshold classifier on the space $\X$ (as $c_A$ and $c_B$ are simple cost functions). This assumption is justified in Section \ref{sec:random}. Given the classifier $f:\X\rightarrow\{0,1\}$ with threshold $\tau$, the best response of a contestant in the subpopulation $S\in\{A,B\}$ is given as follows:
$$
\Delta^S_f(y)=\begin{cases}
y &\quad\text{if } y\ge \tau\\
\tau &\quad\text{if } \tau- \sqrt{2\pi}\sigma_S<y<\tau\\
y &\quad\text{if } y\le \tau- \sqrt{2\pi}\sigma_S
\end{cases}
$$ 
   
The accuracy of the classifier $f$ for the subpopulation $S$ is given as follows:
\begin{align*}
U_S(f)
&=\int_{-\infty}^{\infty}\Pi(y)[f(\Delta^S(y))\cdot (2h(y)-1)+(1-h(y))]dy\\
\end{align*}
Let $c=\int_{-\infty}^{\infty}\Pi(y)[(1-h(y))]dy$ which is independent of the subpopulation and the classifier. 
Therefore, 
\begin{align*}
U_S(f)&=\left(\int_{-\infty}^{\infty}\Pi(y)[f(\Delta^S(y))\cdot (2h(y)-1)]dy\right)+c
\end{align*}

For the convenience of calculations, we will replace $h(y)$ with the following function 
$$
h'(y)=\frac{y}{2d}+\frac{1}{2}    
$$

As $d$ is large and $\Pi$ is a gaussian centered at 0, this change barely affects the utility values. To be precise, the difference in the 
utility calculations for any classifier $f$ while using $h'$ instead of $h$ is bounded by
\begin{align*}
\left|\int_{-\infty}^{\infty}\Pi(y)[f(\Delta^S(y))\cdot 2(h(y)-h'(y))]dy\right|\le &2\int_{-\infty}^{\infty}\Pi(y)[f(\Delta^S(y))|h(y)-h'(y)|]dy\\
\le&2\int_{-\infty}^{\infty}\Pi(y)\cdot |h(y)-h'(y)|dy\\
= &4\int_{d}^{\infty}\Pi(y)\cdot \left(\frac{y}{2d}-\frac{1}{2}\right) dy\\
\le&2\int_{d}^{\infty}\frac{e^{-\frac{y^2}{2t^2}}}{\sqrt{2\pi}t}\cdot \frac{y}{d} \;dy
~~=2\frac{te^{-\frac{d^2}{2t^2}}}{\sqrt{2\pi}d}
\end{align*}

As we take $d$ ($d>>t$) to be large enough, we would be able to ignore this difference. From now onwards, we use $h'$ as the ``true qualification function".

Therefore, the accuracy of the classifier $f$ over the subpopulation $S\in\{A,B\}$ can be approximated by
\begin{align*}
U_S(f)&=\left(\int_{-\infty}^{\infty}\Pi(y)[f(\Delta^S(y))\cdot (2h'(y)-1)]dy\right)+c
=\left(\int_{-\infty}^{\infty}\Pi(y)\cdot f(\Delta^S(y))\cdot \frac{y}{d}\;dy\right)+c\\
&=\left(\int_{\tau-\sqrt{2\pi}\sigma_S}^{\infty}\frac{e^{-\frac{y^2}{2t^2}}}{\sqrt{2\pi}t}\cdot \frac{y}{d}\;dy\right)+c
=\frac{t}{\sqrt{2\pi}d}e^{-(\tau-\sqrt{2\pi}\sigma_S)^2/2t^2}+c
\end{align*}
The second last equality follows from the definition of $\Delta_f^S$ and the fact that $f$ classifies everyone, with the updated private signal greater than or equal to $\tau$, as 1 and 0 otherwise.\\

The overall accuracy of the classifier $f$ is given by
\begin{align}
\label{eq:55}
U(f)=s_A\cdot U_A(f) +s_B\cdot U_B(f)=s_A\cdot \frac{t}{\sqrt{2\pi}d}e^{-(\tau-\sqrt{2\pi}\sigma_A)^2/2t^2}+s_B\cdot \frac{t}{\sqrt{2\pi}d}e^{-(\tau-\sqrt{2\pi}\sigma_B)^2/2t^2}+c
\end{align}

It's clear from the expression that the accuracy for the subpopulation $A$ is maximized at $\tau_A=\sqrt{2\pi}\sigma_A$ and that of $B$ is maximized at $\tau_B=\sqrt{2\pi}\sigma_B$. Consider the case when $s_A<s_B$. As $\tau_A\not =\tau_B$, and $U_B(f)$ has a larger weight in the expression, intuitively, while optimizing the overall accuracy, $\tau$ would try to achieve better accuracy for the subpopulation $B$, irrespective of whether $\sigma_A>\sigma_B$ or $\sigma_A<\sigma_B$, leading to unfairness across the subpopulations ($A$ being at a disadvantage).

It's complicated to calculate the optimal $\tau$, below we give a proof of the fact that the optimal $\tau$ would be such that $U_A(f)<U_B(f)$. To find the optimal value of $\tau$, we differentiate $U(f)$ with respect $\tau$ as follows:

\begin{align*}
\frac{dU(f)}{d\tau}&=s_A\cdot \frac{dU_A(f)}{d\tau} +s_B\cdot \frac{dU_B(f)}{d\tau}\\
&=-\frac{1}{\sqrt{2\pi}td}\left(s_A\cdot (\tau-\sqrt{2\pi}\sigma_A)\cdot e^{-(\tau-\sqrt{2\pi}\sigma_A)^2/2t^2}+s_B\cdot (\tau-\sqrt{2\pi}\sigma_B)\cdot e^{-(\tau-\sqrt{2\pi}\sigma_B)^2/2t^2}\right)
\end{align*}
Therefore, 
\begin{align*}
\frac{dU(f)}{d\tau}=0\implies &s_A\cdot (\tau-\sqrt{2\pi}\sigma_A)\cdot e^{-(\tau-\sqrt{2\pi}\sigma_A)^2/2t^2}+s_B\cdot (\tau-\sqrt{2\pi}\sigma_B)\cdot e^{-(\tau-\sqrt{2\pi}\sigma_B)^2/2t^2}=0\\
\implies  &\left|\frac{(\tau-\sqrt{2\pi}\sigma_A)\cdot e^{-(\tau-\sqrt{2\pi}\sigma_A)^2/2t^2}}{(\tau-\sqrt{2\pi}\sigma_B)\cdot e^{-(\tau-\sqrt{2\pi}\sigma_B)^2/2t^2}}\right|>1 ~~~~~(s_B>s_A)
\end{align*}

As $ze^{-\frac{z^2}{2t^2}}$ is maximized at $z=t$, as long as $|\sigma_A-\sigma_B|\le \frac{t}{\sqrt{2\pi}}$ (implying $|\tau-\sqrt{2\pi}\sigma_S|\le t$ for $S\in\{A,B\}$), the overall accuracy is maximized at a threshold $\tau$ such that $|\tau-\sqrt{2\pi}\sigma_A|>|\tau-\sqrt{2\pi}\sigma_B|$ and hence, $U_A(f^*)<U_B(f^*)$, where $f^*$ is the optimal classifier from Jury's perspective. The assumption, $|\sigma_A-\sigma_B|\le \frac{t}{\sqrt{2\pi}}$, can be interpreted as the subpopulations being different but not extremely different, which is reasonable assumption in many real life scenarios. 
\end{proof}

Next we show that, when the features are appropriately noisy, the optimal classifier from Jury's perspective is fair to the subpopulations. The intuition is as follows: if the noise is large enough such that none of contestants in either of the subpopulations want to manipulate their private signals, then the cost differences become irrelevant and hence, the optimal classifier achieves equal accuracy on both the subpopulations. You would think that this addition of noise would compromise Jury's utility. Subsequently, we show that adding noise might also improve the overall accuracy of the Jury's optimal classifier, therefore, addition of noise can make everyone happier. The latter is a continuation to the results at the start of Section \ref{sec:noise} about the usefulness of noise to the Jury under strategic classification.\\

\textbf{Noisy features lead to fairer outcomes}: Now, we analyze the setting with noisy features and prove the following theorem. The true qualification function $h$, cost functions ($c_A$ and $c_B$) and the probability density function $\Pi$ are as defined for the first example. Let $\sigma=\max\{\sigma_A,\sigma_B\}$. Given a private signal $y$, the features $x$ are distributed according to the gaussian with mean $y$ and standard deviation $\sigma$. The probability density function for the feature $x$ given the private signal $y$ is
$$
p_y(x)=\frac{e^{-\frac{(x-y)^2}{2\sigma^2}}}{\sqrt{2\pi}\sigma}.
$$

\begin{theorem} \label{thm:4}
Let $A$ and $B$ be two subpopulations such that the true qualification functions, $h_A$, $h_B$, the probability density functions, $\pi_A$, $\pi_B$ and the cost functions $c_A$, $c_B$ are as instantiated above.
When the features are drawn with a gaussian noise of mean 0 and standard deviation $\sigma$, such that, $\sigma\ge \sigma_A,\sigma_B$, if $f^*$ is the deterministic classifier that maximizes Jury's utility ($U(f)$), then $f^*$ is fair, that is, $U_A(f^*)=U_B(f^*)$. 
\end{theorem}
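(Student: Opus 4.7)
The plan is to exploit gaussian smoothing: with feature noise of standard deviation $\sigma$, every deterministic $f:\X\to\{0,1\}$ induces a private-signal acceptance probability $q_f(y) = \int_{\X} p_y(x) f(x)\,dx$ that is $\frac{1}{\sqrt{2\pi}\sigma}$-Lipschitz in $y$. This Lipschitz rate matches the upward-move cost rate $\frac{1}{\sqrt{2\pi}\sigma_S}$ from \eqref{eq:54} up to the hypothesis $\sigma \ge \sigma_S$, so no contestant in either $A$ or $B$ will ever strategically move up. Downward moves are free in both subpopulations, so whether a contestant moves down is determined purely by $q_f$ and not by $c_A$ or $c_B$. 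Consequently the best-response functions $\Delta^A_f$ and $\Delta^B_f$ will coincide identically, and because the subpopulations also share $(h, \Pi)$, Jury's accuracy is the same on both for every $f$, in particular for $f^*$.

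The one genuinely non-trivial step is the Lipschitz computation. Differentiating under the integral (valid because convolution with a gaussian smooths any bounded measurable $f$),
$$\frac{d q_f(y)}{dy} \;=\; \int_{\X} f(x)\,\frac{(x-y)\,e^{-(x-y)^2/2\sigma^2}}{\sqrt{2\pi}\,\sigma^3}\,dx.$$
The kernel $(x-y)\,e^{-(x-y)^2/2\sigma^2}/(\sqrt{2\pi}\,\sigma^3)$ is positive for $x>y$ and negative for $x<y$, so the supremum of $q_f'(y)$ over $f\in[0,1]^\X$ is attained by $f(x)=\mathbf{1}[x>y]$, and the substitution $u = (x-y)^2/2\sigma^2$ evaluates it to exactly $1/(\sqrt{2\pi}\sigma)$. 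By symmetry the infimum is $-1/(\sqrt{2\pi}\sigma)$, so $|q_f(y')-q_f(y)| \le |y'-y|/(\sqrt{2\pi}\sigma)$ for all $y,y'$.

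The argument then splits on the cases in \eqref{eq:52}. For $y' > y$ and either $S\in\{A,B\}$,
$$q_f(y') - q_f(y) \;\le\; \frac{y' - y}{\sqrt{2\pi}\sigma} \;\le\; \frac{y' - y}{\sqrt{2\pi}\sigma_S} \;=\; c_S(y, y'),$$
which violates the strict inequality required for $y'$ to be a viable move. For $y' < y$, $c_S(y, y') = 0$ in both subpopulations, so the viability criterion $q_f(y')>q_f(y)$ makes no reference to $S$. Hence $\Delta^A_f(y)$ and $\Delta^B_f(y)$ are argmaxes of the same objective $q_f$ over the same feasible set $\{y\}\cup\{y'<y:q_f(y')>q_f(y)\}$, resolved by the same smallest-element tie-breaking rule, so they are equal as functions of $y$. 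Substituting into the definition of $U_S(f)$,
$$U_A(f) \;=\; \int_{\X} \Pi(y)\bigl[q_f(\Delta_f(y))\,h(y) + (1-q_f(\Delta_f(y)))(1-h(y))\bigr]\,dy \;=\; U_B(f);$$
specializing to $f = f^*$ gives the theorem. The hard part is the Lipschitz calculation; after it, the fairness statement is symmetry bookkeeping, and notably no monotonicity of $f^*$ or $q_{f^*}$ is required, because possibly-nonmonotonic downward manipulations happen identically in the two subpopulations.
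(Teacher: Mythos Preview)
Your proof is correct and takes a genuinely different---and in several ways cleaner---route than the paper's. The paper first computes the feature-space posterior $H'(x)$, identifies the specific optimal classifier $f^*=\one_{x>0}$, verifies \emph{for that one $f$} that $q_{f^*}(y')-q_{f^*}(y)\le (y'-y)/\sqrt{2\pi}\sigma\le c_S(y,y')$ so no one moves, and then computes $U_A(f^*)=U_B(f^*)$ explicitly as $\tfrac{t^2}{\sqrt{2\pi(\sigma^2+t^2)}\,d}+c$. You instead prove the universal Lipschitz bound $|q_f'(y)|\le 1/\sqrt{2\pi}\sigma$ for \emph{every} deterministic $f$, observe that upward moves therefore never strictly beat the cost in either subpopulation while (free) downward moves are governed only by $q_f$, and conclude $\Delta_f^A\equiv\Delta_f^B$ and hence $U_A(f)=U_B(f)$ for all $f$. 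This buys you two things: you never have to argue that $\one_{x>0}$ is in fact the utility-maximizing $f^*$ (a step the paper handles somewhat informally via the Bayes-optimality of $H'(x)\ge\tfrac12$), and you never restrict to threshold classifiers or monotone $q_f$. What the paper's approach buys, in exchange, is the explicit identification of $f^*$ and the closed-form value of $U(f^*)$, which it needs downstream in Theorem~\ref{thm:5}; your argument establishes fairness without producing those numbers.
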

\begin{proof}

Again, we will replace the function $h$ with $h'$ (as in proof of Theorem \ref{thm:3}) while loosing an insignificant amount in all the calculations ($d>>t,\sigma$). Let $\Pi':\X\rightarrow[0,1]$ be the probability
density function over the features realized by each of the subpopulations. Let $H(x)$ ($H:\X\rightarrow[0,1]$) represent the probability of an individual being qualified (1) given that the Jury sees feature $x$. These functions are same for both the subpopulations. As the Jury only sees the feature and not the private signal, her accuracy is information-theoretically limited by these functions as we will describe below. Firstly,  $\Pi':\X\rightarrow[0,1]$ is given as follows:
\begin{align*}
\Pi'(x)&=\int_{-\infty}^{\infty}{\Pi(y)\cdot p_y(x) dy}=\int_{-\infty}^{\infty}{\frac{e^{-\frac{y^2}{2t^2}}}{\sqrt{2\pi}t}\cdot \frac{e^{-\frac{(x-y)^2}{2\sigma^2}}}{\sqrt{2\pi}\sigma} dy}\\
&=\int_{-\infty}^{\infty}\frac{e^{-\frac{x^2}{2(\sigma^2+t^2)}}}{\sqrt{2\pi}t}\cdot \frac{e^{-(y-\frac{xt^2}{\sigma^2+t^2})^2/(2\frac{\sigma^2t^2}{\sigma^2+t^2})}}{\sqrt{2\pi}\sigma} dy\\
&=\frac{e^{-\frac{x^2}{2(\sigma^2+t^2)}}}{\sqrt{2\pi}t\cdot \sqrt{2\pi}\sigma}\int_{-\infty}^{\infty} e^{-(y-\frac{xt^2}{\sigma^2+t^2})^2/(2\frac{\sigma^2t^2}{\sigma^2+t^2})}dy\\
&=\frac{e^{-\frac{x^2}{2(\sigma^2+t^2)}}}{\sqrt{2\pi}t\cdot \sqrt{2\pi}\sigma}\sqrt{2\pi\frac{\sigma^2t^2}{\sigma^2+t^2}}=\frac{e^{-\frac{x^2}{2(\sigma^2+t^2)}}}{\sqrt{2\pi(\sigma^2+t^2)}}
\end{align*}
Therefore, the probability density function over the features realized by the subpopulations, with $\N(0,\sigma)$ gaussian noise, is itself a gaussian with mean $0$ and $\sqrt{(\sigma^2+t^2)}$ standard deviation. 

The qualification function given the features, $H$, is given as follows:
\begin{align*}
H(x)=\frac{1}{\Pi'(x)}\int_{-\infty}^{\infty}{\Pi(y)\cdot p_y(x)\cdot h(y) dy}
\end{align*}
We replace $h$ with $h'$, thus replacing $H$ with $H'$ as defined below:
\begin{align*}
H'(x)&=\frac{1}{\Pi'(x)}\int_{-\infty}^{\infty}{\Pi(y)\cdot p_y(x)\cdot h'(y) dy}=\frac{1}{\Pi'(x)}\int_{-\infty}^{\infty}{\frac{e^{-\frac{y^2}{2t^2}}}{\sqrt{2\pi}t}\cdot \frac{e^{-\frac{(x-y)^2}{2\sigma^2}}}{\sqrt{2\pi}\sigma}\cdot (\frac{y}{2d}+\frac{1}{2}) dy}\\
&=\frac{1}{2}+\frac{1}{\Pi'(x)}\int_{-\infty}^{\infty}\frac{e^{-\frac{x^2}{2(\sigma^2+t^2)}}}{\sqrt{2\pi}t}\cdot \frac{e^{-(y-\frac{xt^2}{\sigma^2+t^2})^2/(2\frac{\sigma^2t^2}{\sigma^2+t^2})}}{\sqrt{2\pi}\sigma} \cdot \frac{y}{2d}\;\;dy\\
&=\frac{1}{2}+\frac{1}{2d\cdot\Pi'(x)}\frac{e^{-\frac{x^2}{2(\sigma^2+t^2)}}}{\sqrt{2\pi}t\cdot \sqrt{2\pi}\sigma}\int_{-\infty}^{\infty} e^{-(y-\frac{xt^2}{\sigma^2+t^2})^2/(2\frac{\sigma^2t^2}{\sigma^2+t^2})}\cdot y \;\;dy\\
&=\frac{1}{2}+\frac{1}{2d\cdot\Pi'(x)}\frac{e^{-\frac{x^2}{2(\sigma^2+t^2)}}}{\sqrt{2\pi}t\cdot \sqrt{2\pi}\sigma}\cdot \sqrt{2\pi\frac{\sigma^2t^2}{\sigma^2+t^2}}\cdot \frac{xt^2}{\sigma^2+t^2}\\
&=\frac{1}{2}+\frac{t^2}{\sigma^2+t^2}\frac{x}{2d}
\end{align*}

Therefore, when there's no strategic manipulation, Jury would classify any individual with feature $x>0$ as 1 and 0 otherwise. This is because, $H'(x)>\frac{1}{2}$ if and only if $x>0$ and the Jury would classify a feature as 1 if and only if, in expectation, the individuals with that feature are more likely to be qualified. This is true irrespective of whether an individual is from the subpopulation $A$ or $B$ because these subpopulations are identical in terms of qualifications, that is, $h_A=h_B=h$ and $\pi_A=\pi_B=\Pi$.

 We show that for the cost functions defined above, if Jury publishes $f=\one_{x>0}$, as the classifier, then none of the contestants in both the subpopulations $A$ and $B$ have an incentive to change their private signal (under $\N(0,\sigma)$ gaussian noise). Hence, the Jury gets the best possible accuracy from these features and the classification is fair. 
For a subpopulation $S\in\{A,B\}$, let $q^S_f(y)$ denote the probability of a contestant, with private signal $y$, being classified as 1 when $f$ is the classifier. Therefore,
\begin{align*}
q_f^S(y)=\int_{-\infty}^{\infty}f(x)\cdot p_y(x)dx=\int_{0}^{\infty}\frac{e^{-\frac{(x-y)^2}{2\sigma^2}}}{\sqrt{2\pi}\sigma}dx
\end{align*}

For a subpopulation $S\in\{A,B\}$, let's calculate the advantage that a contestant, with private signal $y$, gets by changing its signal to $y'$ ($y'>y$, otherwise $q_f^S(y')\le q_f^S(y)$ ):
\begin{align*}
q_f^S(y')-q_f^S(y)&=\int_{0}^{\infty}\frac{e^{-\frac{(x-y')^2}{2\sigma^2}}}{\sqrt{2\pi}\sigma}dx-\int_{0}^{\infty}\frac{e^{-\frac{(x-y)^2}{2\sigma^2}}}{\sqrt{2\pi}\sigma}dx
~~~~~=\int_{-y'}^{\infty}\frac{e^{-\frac{x^2}{2\sigma^2}}}{\sqrt{2\pi}\sigma}dx-\int_{-y}^{\infty}\frac{e^{-\frac{x^2}{2\sigma^2}}}{\sqrt{2\pi}\sigma}dx\\
&=\int_{-y'}^{-y}\frac{e^{-\frac{x^2}{2\sigma^2}}}{\sqrt{2\pi}\sigma}dx~~~~\le \int_{-y'}^{-y}\frac{1}{\sqrt{2\pi}\sigma}dx~~~=\frac{y'-y}{\sqrt{2\pi}\sigma}
\end{align*}
As $\sigma=\max\{\sigma_A,\sigma_B\}$ and recalling the definitions of the cost functions $c_A$ and $c_B$ (Equation \ref{eq:54}), we get that
\begin{align*}
&q_f^A(y')-q_f^A(y)\le c_A(y,y')
&\text{and }
&&q_f^B(y')-q_f^B(y)\le c_B(y,y')
\end{align*}
Therefore, none of the contestants in any of the subpopulations have an incentive to change their private signals. The accuracy of the classifier 
$f$ on the subpopulation $A$ is given as
\begin{align*}
U_A(f)&=\left(\int_{-\infty}^\infty \Pi(y)[q^A_f(\Delta^A_f(y))\cdot(2h(y)-1)] dy\right)+c=\left(\int_{-\infty}^\infty \Pi(y)\int_{0}^{\infty}\frac{e^{-\frac{(x-y)^2}{2\sigma^2}}}{\sqrt{2\pi}\sigma}dx\cdot(2h(y)-1)dy\right)+c\\
&=\left(\int_{0}^\infty \left(\int_{-\infty}^{\infty}\Pi(y)\frac{e^{-\frac{(x-y)^2}{2\sigma^2}}}{\sqrt{2\pi}\sigma}\cdot(2h(y)-1)dy\right)dx\right)+c
=\left(\int_{0}^\infty \Pi'(x)\cdot (2H(x)-1) dx\right)+c
\end{align*}
Replacing $H$ with $H'$ without loosing much in the approximation, we get that 
\begin{align*}
U_A(f)=\left(\int_{0}^\infty \frac{e^{-\frac{x^2}{2(\sigma^2+t^2)}}}{\sqrt{2\pi(\sigma^2+t^2)}}
 \cdot \frac{t^2}{\sigma^2+t^2}\frac{x}{d} dx\right)+c=\frac{t^2}{\sqrt{2\pi(\sigma^2+t^2)}\cdot d}+c
\end{align*}
The calculations for $U_B(f)$ are exactly the same and hence, $U(f)=U_B(f)=U_A(f)=\frac{t^2}{\sqrt{2\pi(\sigma^2+t^2)}\cdot d}+c$. 
\end{proof}

Theorem \ref{thm:4} would hold for when we are concerned with multiple subpopulations as long as $\sigma\ge \sigma_S$ for every relevant subpopulation $S$. In words, using noisy features we \emph{can} ensure that the best response of a Jury, maximizing her own utility, is fair to all the subpopulations that are identical in terms of qualifications but different in terms of the costs to manipulate the private signals, as long as the costs of manipulation for a subpopulation are not too small. 

\textbf{Noisy features can also improve Jury's utility}: Next, we show that further in some cases, \emph{the addition of noise to the features is not only beneficial for ensuring fairness but might also achieve better overall accuracy under strategic classification compared to when a noiseless signal is used.} 

Retaining the instantiations of $h_A$, $h_B$, $\pi_A$, $\pi_B$, $c_A$, $c_B$ and $\sigma$ as above, consider the following two scenarios: 1. Jury bases her classifier on the private signal $y$. 2. The features are drawn with a gaussian noise of mean 0 and standard deviation $\sigma$ and Jury bases her classifier on the features ($x$). 

Let $f_0^*$ and $f^*_\sigma$ be the optimal classifiers under strategic classification in the two scenarios respectively. Let $U(f_0^*)$ be the overall classification accuracy (Jury's utility) under Scenario 1 and $U(f_\sigma^*)$ be the overall classification accuracy (Jury's utility) under Scenario 2. We assume that the subpopulations are equally populated, that is, $s_A=s_B$ for simplicity of calculations in the next theorem.

\begin{theorem}\label{thm:5}
There exists qualification functions, $h_A$, $h_B$, the probability density functions over the private signals, $\pi_A$, $\pi_B$, the cost functions $c_A$, $c_B$ and $\sigma>0$ such that, $U(f_\sigma^*)>U(f_0^*)$, that is, the Jury gets better classification accuracy when the features are drawn with a gaussian noise of mean 0 and standard deviation $\sigma$. Here, the subpopulations have identical qualifications ($h_A=h_B$, $\pi_A=\pi_B$) but different cost functions. 
\end{theorem}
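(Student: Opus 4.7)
The plan is to exhibit a concrete choice of parameters inside the family already set up for Theorems~\ref{thm:3} and~\ref{thm:4}, and to verify the strict inequality $U(f_\sigma^*) > U(f_0^*)$ by a closed-form comparison. I retain the qualification function $h$, the prior $\Pi\sim\N(0,t)$, and the cost structures $c_A,c_B$ exactly as defined previously, and take $s_A=s_B=\tfrac12$ to declutter the algebra. The only free knobs are $\sigma_A,\sigma_B>0$; in the noisy scenario the noise level is then set to $\sigma=\max\{\sigma_A,\sigma_B\}=\sigma_B$ (after arranging $\sigma_B>\sigma_A$).

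First I would derive a closed form for $U(f_0^*)$ using the groundwork already laid in the proof of Theorem~\ref{thm:3}. In the deterministic noiseless regime with simple cost functions, the optimal classifier may be taken to be a threshold classifier (Section~\ref{sec:random}), and for threshold $\tau$ we already have $U_S(f)=c+\tfrac{t}{\sqrt{2\pi}\,d}\,e^{-(\tau-\sqrt{2\pi}\sigma_S)^2/(2t^2)}$. Writing $a:=\sqrt{2\pi}\sigma_A$, $b:=\sqrt{2\pi}\sigma_B$ and using $s_A=s_B=\tfrac12$, the overall utility becomes $U(f)=c+\tfrac{t}{2\sqrt{2\pi}\,d}\,\psi(\tau)$ with $\psi(\tau):=e^{-(\tau-a)^2/2t^2}+e^{-(\tau-b)^2/2t^2}$. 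A one-line second-derivative check at the symmetric point $\tau=(a+b)/2$ shows that when $b-a<2t$ this midpoint is the unique global maximum of $\psi$ (for larger $b-a$, $\psi$ develops two off-center peaks near $a$ and $b$). Plugging in gives $U(f_0^*)=c+\tfrac{t}{\sqrt{2\pi}\,d}\,e^{-(b-a)^2/(8t^2)}$.

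Next I would read $U(f_\sigma^*)$ directly off Theorem~\ref{thm:4}: because $\one_{x>0}$ is simultaneously Bayes-optimal for the noisy posterior $H'$ \emph{and} leaves every contestant with no incentive to change their signal, it attains the no-strategic-manipulation optimum in the noisy setting and is therefore globally optimal, yielding $U(f_\sigma^*)=c+\tfrac{t^{2}}{\sqrt{2\pi(\sigma^{2}+t^{2})}\,d}$. With the concrete choice $\sigma_B=t/\sqrt{2\pi}$ and $\sigma_A=\epsilon$ for a small $\epsilon>0$, one has $b=t$, $a\to 0^{+}$, $b-a<2t$, and $\sqrt{2\pi(\sigma^2+t^2)}=t\sqrt{2\pi+1}$; the desired inequality collapses (as $\epsilon\to 0$) to the purely numerical statement $\tfrac{1}{\sqrt{2\pi+1}}>\tfrac{e^{-1/8}}{\sqrt{2\pi}}$, equivalently $\sqrt{2\pi}>\sqrt{2\pi+1}\,e^{-1/8}$, which holds (roughly $2.507>2.383$). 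Continuity then hands us the same strict inequality for all sufficiently small $\epsilon>0$.

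The one subtle step, and where I expect the main obstacle to live, is identifying the maximizer of $\psi$: I need $b-a<2t$ so the max sits at the midpoint rather than splitting into tail peaks. The choice $b-a\approx t$ is comfortably inside this regime, and conveniently also inside the fairness window $|\sigma_A-\sigma_B|\le t/\sqrt{2\pi}$ of Theorem~\ref{thm:3} and the eligibility window $\sigma\ge\sigma_A,\sigma_B$ of Theorem~\ref{thm:4}, so nothing previously proved has to be re-derived --- the whole argument reduces to plugging the chosen parameters into two exponential formulas already established in the prior theorems.
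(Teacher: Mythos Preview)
Your proposal is correct and follows essentially the same approach as the paper: both arguments reuse the closed-form utilities from Theorems~\ref{thm:3} and~\ref{thm:4} with $s_A=s_B=\tfrac12$, identify the midpoint $\tau=(a+b)/2$ as the noiseless optimum, and then verify a numerical inequality for a specific parameter choice. The paper picks $\sigma_B=\sigma$, $\sigma_A=0.1\sigma$, $t=0.9\sqrt{2\pi}\sigma$ (which also gives $b-a=t$ and hence the same exponent $-1/8$ you obtain), while you take $\sigma_B=t/\sqrt{2\pi}$ and $\sigma_A\to 0^+$; both land on the same comparison.
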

\begin{proof}
We retain the instantiations of $h_A$, $h_B$, $\pi_A$, $\pi_B$, $c_A$, $c_B$ and $\sigma$ as above. As seen above, in Scenario 2, $\one_{x>0}$ is the classifier that optimizes Jury's utility and hence,
  $U(f_\sigma^*)= \frac{t^2}{\sqrt{2\pi(\sigma^2+t^2)}\cdot d}+c$.
 Actually, it's approximately equal to this but the error is extremely small ($e^{-\Omega(d)}$, $d>>t, \sigma$). 
 
 In Scenario 1, the utility of any threshold classifier ($f$) with $\tau$ as the threshold is given by Equation \ref{eq:55} (without loss of generality, we can optimize over threshold classifiers). Therefore,
\begin{align*}
U(f)=s_A\cdot \frac{t}{\sqrt{2\pi}d}e^{-(\tau-\sqrt{2\pi}\sigma_A)^2/2t^2}+s_B\cdot \frac{t}{\sqrt{2\pi}d}e^{-(\tau-\sqrt{2\pi}\sigma_B)^2/2t^2}+c
\end{align*}  
When $s_A=s_B=\frac{1}{2}$ and we assume that $|\sigma_A-\sigma_B|\le \frac{t}{\sqrt{2\pi}}$, it's easy enough to see that the above expression is maximized at 
$\tau=\frac{\sqrt{2\pi}\sigma_A+\sqrt{2\pi}\sigma_B}{2}$. Therefore, the optimal classification accuracy in Scenario 1, is 
$$U(f_0^*)= \frac{t}{\sqrt{2\pi}d}e^{-(\frac{\sqrt{2\pi}\sigma_A-\sqrt{2\pi}\sigma_B}{2})^2/2t^2}+c$$

For $\sigma_B=\sigma$, $\sigma_A=0.1\sigma$, $t=0.9\sqrt{2\pi}\sigma$, $U(f_\sigma^*)>U(f_0^*)$.
\end{proof}
This theorem corroborates the idea that not only the subpopulations, but even the Jury might prefer noisy features. In the above example, for simplicity, we assumed $s_A=s_B$. Therefore, the optimal classifier was fair even in the noiseless setting. But a slight tweak in $s_A$ so that $s_A<s_B$ wouldn't change Jury's utility, in Scenario 1, by much and thus, would give an example where the noiseless setting has both unfairness and lesser overall classification accuracy.\\

In this paper, we study the interaction of noise with strategic classification through some simple examples, and leave the task of generalizing these results for future research.

\section{Discussion}
The problem of classification (and the strategic classification problem it entails) is of tremendous importance both practically (affecting pretty much every industry) and theoretically (with implications ranging from algorithms to policy and law). Therefore, clarifying the role randomness plays in this specific family of games is an important goal. Just as in games, randomness may lead to better solution in strategic classification. Moreover, in many important settings (such as college admissions in some jurisdictions), the classifier is required to be deterministic by law — which is not a handicap for algorithmic classification, but is a handicap for strategic one.
In addition, we proved that, in many natural cases, any randomized classifier (based on one-dimension) that achieves strictly better accuracy than the optimal deterministic one is not stable from the classifier’s standpoint, thus illustrating the difficulty of implementing a randomized classifier in a more complicated scenario with multiple classifiers (such as college admissions). This motivates the use of noisy features as a commitment device, which can improve both accuracy and fairness, and is also practically possible (for example by restricting the types of information available to the classifier).

\bibliographystyle{alpha}
\bibliography{biblio}
\end{document}